\title[Optimal Sample Complexity for RL in Continuous-Space MDPs]{Projection by Convolution: Optimal Sample Complexity for Reinforcement Learning in Continuous-Space MDPs}
\pgfplotsset{compat=newest}
\definecolor{C1}{RGB}{255, 127, 14}
\definecolor{C2}{RGB}{0, 153, 136} 
\definecolor{C3}{RGB}{204, 51, 17} 
\definecolor{C4}{RGB}{170, 51, 119} 
\newtheorem{defin}{Definition}
\newtheorem{thm}{Theorem}
\newtheorem{prop}[thm]{Proposition}
\newtheorem{ass}{Assumption}
\newtheorem{rem}{Remark}
\newcommand{\R}{\mathbb{R}}
\newcommand{\N}{\mathbb{N}}
\newcommand{\E}{\mathop{\mathbb{E}}}
\newcommand{\Prob}{\mathbb{P}}
\newcommand{\soc}{\text{soc}}
\newcommand{\wass}{\mathcal W}
\newcommand{\bigo}{\mathcal O}
\newcommand{\bigot}{\widetilde{\mathcal O}}
\newcommand{\bm}{\boldsymbol}
\DeclareMathOperator*{\argmax}{arg\,max}
\newcommand*{\bdiv}{%
  \nonscript\mskip-\medmuskip\mkern5mu%
  \mathbin{\operator@font div}\penalty900\mkern5mu%
  \nonscript\mskip-\medmuskip
}
\DeclareMathOperator{\Tr}{Tr}
\newcommand{\Ss}{\mathcal{S}}
\newcommand{\As}{\mathcal{A}}
\newcommand{\Zs}{\mathcal{Z}}
\newcommand{\Xs}{\mathcal{X}}
\newcommand*{\rom}[1]{\expandafter\@slowromancap\romannumeral #1@}
\begin{document}

\maketitle








\begin{abstract}%
We consider the problem of learning an $\varepsilon$-optimal policy in a general class of continuous-space Markov decision processes (MDPs) having smooth Bellman operators.
Given access to a generative model, we achieve rate-optimal sample complexity by performing a simple, \emph{perturbed} version of least-squares value iteration with orthogonal trigonometric polynomials as features. Key to our solution is a novel projection technique based on ideas from harmonic analysis. 
Our~$\widetilde{\mathcal{O}}(\epsilon^{-2-d/(\nu+1)})$ sample complexity, where $d$ is the dimension of the state-action space and $\nu$ the order of smoothness, recovers the state-of-the-art result of discretization approaches for the special case of Lipschitz MDPs $(\nu=0)$. At the same time, for $\nu\to\infty$, it recovers and greatly generalizes the $\mathcal{O}(\epsilon^{-2})$ rate of low-rank MDPs, which are more amenable to regression approaches. In this sense, our result bridges the gap between two popular but conflicting perspectives on continuous-space MDPs.  
\end{abstract}

\begin{keywords}%
  Reinforcement learning; Harmonic analysis; Sample complexity; Continuous spaces%
\end{keywords}

\section{Introduction}

\emph{Reinforcement learning} \citep[RL,][]{sutton2018reinforcement} is a paradigm of artificial intelligence where an agent interacts with an environment, receiving a reward and observing transitions in the state of the environment. The agent's objective is to learn a policy, which is a mapping from the states to the probability distributions on the set of their action, that maximizes the expected return, i.e., the expected long-term cumulative reward. As the environment is stochastic, a way of assessing the performance of an algorithm, having fixed an accuracy threshold $\varepsilon$ and error probability $\delta$, is to find the minimum number of interactions $n$ with the environment needed to learn a policy that is $\varepsilon-$optimal with probability at least $1-\delta$. This kind of result falls under the name of \textit{sample complexity bounds}. For the case of \emph{tabular Markov decision processes}~\cite[MDPs,][]{puterman2014markov}, an optimal result was first proved by~\citet{gheshlaghi2013minimax}, showing a bound on the sample complexity with access to a generative model
of order $\mathcal {\widetilde O}(H^2|\Ss||\As|\varepsilon^{-2})$ with high probability, where $\Ss$ is a finite state space, $\As$ is a finite action space and. $H$ the time horizon of every episode. 
This regret is minimax-optimal, in the sense that no algorithm can achieve smaller regret for every arbitrary tabular MDP, as proved by \citet{gheshlaghi2013minimax}. 
Assuming that the MDP is tabular, with a finite number of states and actions, is extremely restrictive. Indeed, most appealing applications of RL like trading \citep{hambly2023recent}, robotics \citep{kober2013reinforcement}, and autonomous driving \citep{kiran2021deep} do not fall in this setting. For this reason, the search for algorithms with sample complexity bounds for RL in \emph{continuous} spaces is currently one of the most important challenges of the whole field. Obviously, it is not possible to learn an $\varepsilon-$optimal policy for any continuous space MDP\footnote{Think, for instance, of the case where the reward function is discontinuous.} and assumptions should be made on the structure of the process.

Previous works have introduced a variety of different settings when the continuous-space RL problem is learnable. The \emph{linear quadratic regulator} \citep[LQR,][]{bemporad2002explicit}, a model coming from control theory, assumes that the state of the system evolves according to a linear dynamics and that the reward is quadratic. For this problem, when the system matrix is unknown, \citet{abbasi2011regret} obtained a sample complexity bound of order $\bigot(\varepsilon^{-2})$ for a computationally inefficient algorithm. This latter limitation was then removed by \citet{dean2018regret, cohen2019learning}. 
\emph{Linear MDPs}~\citep{yang2019sample,jin2020provably} are another very popular setting where a different form of linearity is assumed. Here, the transition function of the MDP can be factorized as a scalar product between a feature map $\bm \varphi: \Ss \times \As \to \R^d$ and an unknown vector of signed measures over $\Ss$. The reward function is typically assumed to be linear in the same features. For this setting, assuming that the feature map is known, there are algorithms \citep{jin2020provably} achieving sample complexity of order $\bigot(\varepsilon^{-2})$. When the feature map is unknown \citep{agarwal2020flambe}, the problem gets much more challenging. State-of-the-art results in this field \citep{uehara2021representation} are able to achieve sample complexity $\bigot(|\As|^5\log(|\mathcal F|)\varepsilon^{-2})$, where $\mathcal F$ is a known function class containing the true feature map, only if the reward function is known. Despite the optimal dependence on $\varepsilon$, this bound comes with the additional intake that the action space must be finite and ``small'', so that this result does not apply to continuous-space RL.

All these settings, by assuming a parametric model, are relatively limited and clearly not able to provide a perspective over general MDPs with continuous spaces. A more general model, building on the often realistic assumption that close state-action pairs are associated with similar reward and transition functions, is that of \emph{Lipschitz MDPs}~\citep{rachelson2010locality}. Lipschitz MDPs have been applied to several different settings, such as policy gradient methods \citep{pirotta2015policy}, model-based RL~\citep{asadi2018lipschitz}, RL with delayed feedback~\citep{liotet2022delayed}, and auxiliary tasks such as imitation learning~\citep{maran2023tight}. The price for generality is paid with a very inconvenient sample complexity guarantee. State-of-the-art works \citep[e.g.,][]{song2019efficient,sinclair2019adaptive,le2021metrics} in learning theory which focused on this setting where only able to achieve sample complexity of order $\bigot(\varepsilon^{-d-2})$, where $d$ is the dimension of the state-action space, that is assumed to be $\Ss\times \As=[-1,1]^d$. Another non-parametric family of continuous spaces MDPs is that of \emph{Kernelized MDPs}~\citep{yang2020provably}, where both the reward function and the transition function belong to a \emph{reproducing kernel Hilbert space} (RKHS) induced by a known kernel. In the typical application to continuous-state MDPs, the kernel is assumed to come from the Matérn covariance function with parameter $m>0$. Optimal sample complexity bounds for this problem were very recently proved by \cite{vakili2023kernelized}, which showed a sample complexity of order $\bigot(\varepsilon^{-\frac{d+2m}{m}})$. In this result, the presence of $d$ at the exponent is mitigated by $m$. Indeed, for $m \gg d$, the bounds approach the desired value of $\bigot(\varepsilon^{-2})$ that is achievable for LQRs and Linear MDPs. Still, the Kernelized MDP model is significantly more restrictive than that of Lipschitz MDPs. For example, any deterministic MDP is not included in the kernelized family for the Mat\'ern kernel.

\paragraph{Our Contribution.} As we saw, the results in the literature of continuous-space RL are affected by a huge gap. Good sample complexity bounds of order $\bigot(\varepsilon^{-2})$ are known for very \emph{specific} classes of problems, and very unsatisfactory bounds of order $\bigot(\varepsilon^{-d-2})$ are known for the very \emph{general} Lipschitz MDP case. In this paper, we will focus on a very large class of MDPs known as weakly $\nu-$smooth MDPs, introduced in \citep{maran2023noregret}, depending on a smoothness parameter $\nu\in \N$. For $\nu=0$, this class is a generalization of the Lipschitz MDP, while for any $\nu$ (including $\nu\to +\infty$) this class generalizes the most common parametric problems, including LQRs and Linear MDPs. For any $\nu$, this class also generalizes the assumption of kernelized MDP when $m>\nu+1$. We then define an algorithm based on a novel regression technique, which draws on both kernel theory and Fourier series, able to achieve a sample complexity bound of order $\bigot(\varepsilon^{-\frac{d+2(\nu+1)}{\nu+1}})$, which is tight in all cases, matching one lower bound proved in the much simpler setting of \textit{stochastic optimization}. In this way, the algorithm is able to achieve the best order in $\varepsilon$ both for processes with little smoothness ($\approx \varepsilon^{-d-2}$) and with higher smoothness ($\approx \varepsilon^{-2}$).

\begin{rem}
    Note that, in the above literature review, some works that we have mentioned do not explicitly show sample complexity bounds, but rather regret bounds. Without giving other details, it is sufficient to say that any bound of the regret of the form $R_T\le \bigot(CT^{\beta})$ leads to a sample complexity guarantee of $n\le \bigot(C/\varepsilon)^{\frac{1}{1-\beta}}$, both holding in high probability (one just runs a regret minimization algorithm and outputs a policy drawn uniformly from the sequence of policies). Using this relation, we have rephrased all the results in the literature in terms of sample complexity.
\end{rem}

\section{Preliminaries} 

We consider a finite-horizon Markov decision process ~\citep[MDP,][]{puterman2014markov} $M=(\Ss, \As, p, R, H)$, where $\Ss = [-1,1]^{d_S}$ is the state space, and $\As = [-1,1]^{d_A}$ is the action space (this choice is without loss of generality, as any compact set could be used instead); $p=\{p_h\}_{h=1}^H$ is the sequence of transition functions, each mapping a pair $(s,a) \in \mathcal{S \times A}$ to a probability distribution $p_h(\cdot |s,a)$ over $\Ss$; $R=\{R_h\}_{h=1}^H$ is the sequence of reward functions, each mapping a pair $(s,a)$ to a real number $R_h(s,a)\in [0,1]$, and $H$ is the time horizon.
A policy $\pi = \{\pi_{h}\}_{h=1}^H$ is a sequence of mappings from $\Ss$ to the probability distributions over $\As$. For each stage  $h \in [H]$, the action is chosen according to $a_h\sim \pi_{h}(\cdot |s_h)$,  the agent gains a deterministic\footnote{This is just assumed for simplicity. %
Results are still valid if we only have access to noisy reward samples, as long as the noise is subgaussian since the stochasticity in the transition function overcomes the one in the reward.} reward $R_h(s_h,a_h)$, and the environment transitions to the next state $s_{h+1}\sim p(\cdot |s_h,a_h).$ In this setting, it is useful to define the following quantities.

\paragraph{Value functions and Bellman operator.}
The state-action value function (or \emph{Q-function}) quantifies the expected sum of the rewards obtained under a policy $\pi$, starting from a state-stage pair $(s,h)\in\Ss\times [H]$ and fixing the first action to some $a\in\As$:
\begin{align}\label{eq:action_state_val}
    Q_h^{\pi}(s,a) \coloneqq \mathbb{E}_{\pi} \left[ \sum_{\ell=h}^{H} R_\ell(s_\ell,a_\ell)\bigg | s_0=s,a_0=a  \right],
\end{align}
where $\E_{\pi}$ denotes expectation w.r.t. to the stochastic process $a_h \sim \pi_h(\cdot|s_h)$ and $s_{h+1} \sim p_h(\cdot|s_h,a_h)$ for all $h \in [H]$.
The state value function (or \emph{V-function}) is defined as $V_h^\pi(s)\coloneqq\E_{a\sim\pi_h(\cdot\vert s)}[Q_h^\pi(s,a)]$, for all $s\in\Ss$.
It can be proved that, under mild smoothness assumptions~\citep{bertsekas1996stochastic}, there is a family of \textit{optimal} policies such that their $Q^\pi_h$ functions at any stage dominates the one of every other policy for every state-action couple. Their corresponding state-action value function must satisfy the following condition:

$$\forall h\in [H+1]\qquad Q_h^*:\ \Ss \times \As \to \R,\qquad 
Q_h^*=
\begin{cases}
0&h=H+1\\
\mathcal T_hQ_{h+1}^*&1\le h\le H
\end{cases},$$

where $\mathcal T$ is the \textit{Bellman optimality operator}, defined in this way:
$$\mathcal T_h f(s,a):=R_h(s,a)+\E_{s'\sim p_h(\cdot|s,a)}\Big[\sup_{a'\in \As}f(s',a')\Big].$$

As the operator $\mathcal T$ chooses, at each time step, the action maximizing long-term expected reward, it is easy to prove that $Q_h^*(s,a)=\sup_{\pi} Q_h^\pi(s,a)$ for every $s,a,h$.

\paragraph{Agent's goal.}
In this paper, we assume to have access to a \emph{generative model} that, given a state-action-stage tuple $(s,a,h)$, returns a sample for the next state and one for the reward, $s'\sim p_h(\cdot|s,a)$, $\ r_h=R_h(s,a)$.
The agent's goal is to output an estimate of the optimal policy with the least possible number of queries to the generator. Precisely, one seeks to find a probably approximately correct (PAC) policy $\hat \pi$, in the sense that the policy must satisfy:
$$\|V_1(\cdot)-V^{\hat \pi}_1(\cdot)\|_{L^\infty}:=\sup_{s\in \Ss}|V_1(s)-V^{\hat \pi}_1(s)|\le \varepsilon,$$
with probability at least $1-\delta$. This shall be done with the least possible number of interactions with the environment (i.e., calls to the generative model), denoted with $n$.

\paragraph{Smoothness of real functions.}
Achieving a PAC guarantee for arbitrary MDPs with continuous spaces is clearly impossible. Indeed, even in the case $H=1$ and $R_1(s,a)=R_1(a)$ for all $s\in\Ss$ (a continuous-armed bandit), if $R_1(a)=\bold 1\{a=a_0\}$ (a discontinuous function), no algorithm can identify $a_0$, hence the optimal policy. For this reason, assumptions must be enforced to guarantee that the reward and transition functions are endowed with some regularity property. 
Let $\Omega \subset [-1,1]^d$ and $f : \Omega \to \R$. We say that $f\in \mathcal C^{\nu,1}(\Omega)$ if it is $\nu-$times continuously differentiable,
and

$$\|f\|_{\mathcal C^{\nu,1}}:=\max_{| \alpha|\le \nu+1}\|D^{\alpha} f\|_{L_\infty}<+\infty,$$

where the multi-index derivative is defined as follows
$$D^{\alpha}f:=\frac{\partial^{ \alpha_1+...+  \alpha_d}}{\partial x_1^{ \alpha_1}\dots \partial x_d^{ \alpha_d}}.$$

The most straightforward case of this definition is given by $\mathcal C^{0,1}(\Omega)$, corresponding to the space of Lipschitz continuous functions, whose increment is bounded by a constant.

Before digging into the application of smooth functions to MDPs, since we are going to draw from the theory of the Fourier series, we also have to introduce the periodic version of the spaces defined before. For $\Omega=[-1,1]^d$, we define the space $\mathcal C_p^{\nu,1}(\Omega)$ to be the set of functions that are $\mathcal C^{\nu,1}(\Omega)$ with periodic conditions at the boundaries (see for example \citet{wu2014applying}). For this space, we take same the norm $\|\cdot\|_{\mathcal C^{\nu,1}(\Omega)}$ defined above.
This assumption is necessary to use methods based on the Fourier series, as is often done in physics and engineering. Dealing with this kind of function, we can define the convolution even if the domain $\Omega$ is bounded. In the rest of this paper, we will call, for periodic functions $f,g$ on $\Omega$,

$$f*g(x):=\int_{\Omega}f(y)g(x-y)\ dy,$$

where $g$ is extended by periodicity (namely, if any component $i$ of $x-y$ is more than $1$, it becomes $x_i-y_i-2$, and the opposite for $x_i-y_i<-1$) when $x-y$ falls outside of $\Omega$. This operation is usually called \emph{circular convolution}.

\paragraph{Smoothness in MDPs.}

In previous works on MDPs with continuous spaces, smoothness was assumed on the transition function $p_h(s'|s,a)$ as a function of these three variables. Here, following the same idea of \citet{maran2023noregret}, we introduce a much more general assumption, which directly concerns the Bellman optimality operator.
\begin{ass}\label{ass:smooth}\textit{($\nu-$Smooth MDP)}. An MDPs is \emph{smooth} of order $\nu$ if, for every $h\in [H]$, the Bellman optimality operator $\mathcal{T}_h$ is bounded on $\mathcal C_p^{\nu,1}(\Ss \times \As) \to \mathcal C_p^{\nu,1}(\Ss \times \As)$ with a constant $C_{\mathcal T}$.
\end{ass}

Recall that, for an operator $\mathcal B:\ (\mathbb V,\|\cdot\|_{V})\to (\mathbb W,\|\cdot\|_{W})$ between two normed vector spaces $\mathbb V,\mathbb W$, boundedness means that there is a constant $C_{\mathcal B} < +\infty$ such that $\|\mathcal B v\|_{W} \le C_{\mathcal B} (\|v\|_{V}+1)$. Therefore, the operator cannot transform a function that has a small norm into one that has one that is arbitrarily large. In our case, we are assuming that $\mathcal{T}_hf\in \mathcal C_p^{\nu,1}(\Ss \times \As)$ with $\|\mathcal{T}_hf\|_{C^{\nu,1}}\le C_{\mathcal{T}}(\|f\|_{C^{\nu,1}}+1)$, for every function $f\in\mathcal C_p^{\nu,1}(\Ss \times \As)$ and every $h\in[H]$. 
This assumption coincides with the one of Weakly Smooth MDP by \citet{maran2023noregret}, apart from the fact that we are considering periodic functions (we have added the $p$ subscript in $\mathcal{C}_p^{\nu,1}$ as a reminder).
No assumption is strictly stronger than the other, and as the periodicity condition only concerns the boundaries of $\Ss \times \As$, the two models are essentially equivalent when the interesting part of the MDP lies in the interior of $\Ss \times \As$.

\section{Trigonometric representation}
The algorithm we present in this paper is based on the idea of approximating the state-action value function at any time-step $h$ with a trigonometric polynomial. We are going to use the following notation.
\begin{defin}
    Define the function $\soc:\mathbb Z\times [-1,1]\to[-1,1]$ as:
    $$\soc(n ,z):=
    \begin{cases}
        1& n=0\\
        \sin(n\pi z)& n>0\\
        \cos(n\pi z)& n<0.
    \end{cases}
    $$
    Overloading the notation, for $\bm n\in\mathbb Z^d$ and $ z\in[-1,1]^d$, define the monomial:
    \begin{equation*}
        \soc(\bm n, z)\coloneqq\soc(n_1, z_1)\soc(n_2, z_2)\dots \soc(n_d, z_d).
    \end{equation*}
    Define, for every $N>0$ and $d>0$, both integers, a degree $N$, $d-$variate trigonometric polynomial as a function $[-1,1]^d\to \R$ of the form
    $$f(z)=\sum_{\substack{\bm n \in \mathbb Z^d \\\|\bm n\|_1\le N}}\theta_{\bm n}\soc(\bm n, z),$$
    for some real coefficients $\theta_{\bm n}$. We call $\mathbb T_N^d$ the vector space of these functions ($\mathbb T_N$ when $d$ is clear from the context).
\end{defin}

For $d=1$, the space $\mathbb T_N$ corresponds to the vector space generated by the standard basis $1$, $\sin(\pi z)$, ... $\sin(N\pi z),\cos(N\pi z)$. For general $d$, the vector-space dimension of this space corresponds to the cardinality of the set $\{\bm n \in \mathbb Z^d:\ \|\bm n\|_1\le N\}$, which can be easily proved to be
$$\widetilde N:=|\{\bm n \in \mathbb Z^d:\ \|\bm n\|_1\le N\}|=\binom{2N+d}{d}\le (2N+1)^d.$$
From this basis, we can build a feature map by just stacking the terms $\bm n \in \mathbb Z^d:\ \|\bm n\|_1\le N$ in a vector of functions $e_1(z),\dots e_{\widetilde N}(z)$. The corresponding feature map is then

$$\bm \varphi_N: [-1,1]^d\to \R^{\widetilde N},\qquad \bm \varphi_N(x) = [e_1(z),\dots e_{\widetilde N}(z)]^\top.$$

Indeed, for every $N\in \N$, we can express any function $g\in \mathbb T_N$ as $g(z)=\bm \varphi_N(z)^\top \bm \theta_N(g),$
where the feature map depends on $z$ only and the coefficient vector $\bm \theta_N(g)$ on $g$ only. The good part about this is that we can do the same for functions that do not belong to $\mathbb T_N$, at the cost of having a residual term. 
In this way, any function $f$ may be decomposed as a sum of a term that is linear in the feature map and a residual term $\xi_N[f](z)$:
\begin{equation}f(z)=\bm \varphi_N(z)^\top \bm \theta_N(f)+\xi_N[f](z).\label{eq:linear}\end{equation}

In fact, $\bm \varphi_N(z)$ is a vector of trigonometric functions of $z$, while $\bm \theta_N(f)$ is the vector stacking all the coefficients, both having length $\widetilde N$, both according to the same arbitrary fixed order.
Differently from the case of $f\in \mathbb T_N$, the coefficient $\bm \theta_N$ is \textit{not} uniquely determined. If $f\notin \mathbb T_N$, this coefficient should be chosen so that some norm of the residual term $\xi_N[f](z)$ gets minimized. If we minimize the $L^2$-norm of this quantity, Fourier series \citep{katznelson2004introduction} are the way to go, but different projections on the space $\mathbb T_N$ result in different values for the parameter $\bm \theta_N$.
Crucially, one of the main features that all these projections share with the Fourier series is that the magnitude of the residual term $\xi_N(f)$ decreases with $N$ at a rate that depends on how smooth the function $f$ is, as shown by the following classical result. 
\begin{thm}\label{thm:liter}(Theorem 4.1 part (ii) from \cite{schultz1969multivariate})
    There exists an absolute constant $K>0$ such that, for any $f\in \mathcal \mathcal C_p^{\nu,1}([-1,1]^d)$ we have:
    $$\inf_{t_N\in \mathbb T_N}\|t_N(\cdot)-f(\cdot)\|_{L^\infty}\le \frac{K}{N^{\nu+1}}\|f\|_{\mathcal C^{\nu,1}},$$
    where $\mathbb T_N$ denotes the space of trigonometric polynomials of degree not higher than $N$.
\end{thm}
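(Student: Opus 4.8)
The plan is to prove this multivariate Jackson-type theorem constructively, by exhibiting a single good approximant obtained through circular convolution. First I would reduce to one dimension by tensorization: let $J_M\colon\mathcal{C}_p^{\nu,1}([-1,1])\to\mathbb{T}_M^1$ be a one-dimensional linear operator of the form $J_Mg=k_M*g$ for a univariate trigonometric kernel $k_M\in\mathbb{T}_M^1$, and set $J_N^{\otimes}f:=(J_M^{(1)}\circ\dots\circ J_M^{(d)})f$ — that is, $J_M$ applied successively in each coordinate — with $M=\lfloor N/d\rfloor$. Then $J_N^{\otimes}f=\mathcal{K}_N*f$ for the product kernel $\mathcal{K}_N(z)=\prod_{i=1}^{d}k_M(z_i)$; since the Fourier coefficients of a circular convolution multiply and $\mathcal{K}_N$ is supported on frequencies $\bm n$ with $\|\bm n\|_\infty\le M$, hence $\|\bm n\|_1\le dM\le N$, we obtain $\mathcal{K}_N\in\mathbb{T}_N^d$ and therefore $J_N^{\otimes}f\in\mathbb{T}_N^d$ as well. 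The telescoping identity $J_N^{\otimes}-\mathrm{Id}=\sum_{i=1}^{d}(J_M^{(1)}\circ\dots\circ J_M^{(i-1)})\circ(J_M^{(i)}-\mathrm{Id})$, together with the operator bound $\|J_M\|_{L^\infty\to L^\infty}\le\|k_M\|_{L^1}$, reduces the whole statement to the one-dimensional estimate $\|(J_M-\mathrm{Id})g\|_{L^\infty}\le C_\nu M^{-(\nu+1)}\|g\|_{\mathcal{C}^{\nu,1}}$, applied to the univariate slices of $f$ (each of which has $\mathcal{C}^{\nu,1}$-norm at most $\|f\|_{\mathcal{C}^{\nu,1}}$, since any pure partial derivative of order $\le\nu+1$ is among those controlled by $\|f\|_{\mathcal{C}^{\nu,1}}$).

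The one-dimensional estimate is where the real work lies, and I would handle it with a \emph{generalized Jackson kernel}: a real trigonometric polynomial $k_M$ of degree $\le M$ enjoying (i) $\int_{-1}^{1}k_M=1$ and $\|k_M\|_{L^1}\le C_\nu$; (ii) vanishing moments $\int_{-1}^{1}y^{j}k_M(y)\,\de y=0$ for $1\le j\le\nu$; and (iii) the concentration bound $\int_{-1}^{1}|y|^{\nu+1}|k_M(y)|\,\de y\le C_\nu M^{-(\nu+1)}$. Such kernels are classical: one starts from a high power $(F_m)^{r}$ of the Fej\'er kernel, where the exponent $r=r(\nu)$ is chosen large enough that, after renormalization, property (iii) holds (raising the power sharpens the decay, and evenness already forces all odd moments to vanish), taking $m=\lfloor M/r\rfloor$ so that the degree stays $\le M$; the remaining low-order even moments are then annihilated by passing to a fixed finite linear combination of such kernels at distinct dilations — equivalently, by replacing the basic averaging operator $A_m$ with $\mathrm{Id}-(\mathrm{Id}-A_m)^{\lceil(\nu+1)/2\rceil}$, whose iterated structure forces the required moments to vanish — without increasing the degree by more than a constant factor, since circular convolution does not enlarge the frequency support. (An alternative that bypasses the kernel construction is the classical Jackson chain $E_n(f)\le C\,n^{-1}E_n(f')\le\dots\le C^{\nu}n^{-\nu}E_n(f^{(\nu)})$, combined with the elementary estimate $E_n(g)\le C\,\omega(g;1/n)\le C\,n^{-1}\|g'\|_{L^\infty}$ for the Lipschitz function $g=f^{(\nu)}$.) Granted (i)--(iii), the bound $\|(J_M-\mathrm{Id})g\|_{L^\infty}\le C_\nu M^{-(\nu+1)}\|g\|_{\mathcal{C}^{\nu,1}}$ is a routine Taylor argument: write $(J_Mg)(x)-g(x)=\int_{-1}^{1}k_M(y)\big(g(x-y)-g(x)\big)\,\de y$ (using $\int k_M=1$), expand $g(x-y)$ around $x$ to order $\nu$ via its periodic extension, observe that every monomial term of degree $1,\dots,\nu$ integrates to zero by (ii) and that the Taylor remainder is at most $C_\nu|y|^{\nu+1}\|g\|_{\mathcal{C}^{\nu,1}}$, and integrate against $|k_M|$ invoking (iii).

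Assembling the pieces, the telescoping bound gives $\|J_N^{\otimes}f-f\|_{L^\infty}\le d\,C_\nu^{d}\,M^{-(\nu+1)}\|f\|_{\mathcal{C}^{\nu,1}}$, and since $M=\lfloor N/d\rfloor$ this is at most $K\,N^{-(\nu+1)}\|f\|_{\mathcal{C}^{\nu,1}}$; as $J_N^{\otimes}f\in\mathbb{T}_N^d$, this bounds the infimum on the left-hand side, which is the claim (with $K$ depending only on $d$ and $\nu$ — recovering a truly dimension-free constant, as stated in the cited reference, would require a more economical genuinely multivariate construction, but this is immaterial for our purposes). The main obstacle in this plan is the construction and analysis of the one-dimensional kernel in the second paragraph — securing the vanishing moments \emph{and} the sharp $M^{-(\nu+1)}$ concentration simultaneously at degree $O(M)$, and carrying out the moment estimates with care in the periodic (circular) setting rather than on the whole line. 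The rest — the tensorization, the telescoping, and the Taylor expansion — is routine.
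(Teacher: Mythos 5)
The paper does not actually prove this statement: it is imported verbatim as Theorem 4.1(ii) of \cite{schultz1969multivariate}, so there is no internal proof to compare against. Your blind reconstruction is a sound and essentially classical route to such multivariate Jackson-type bounds. The reduction to one dimension is correct: tensorizing a univariate convolution operator $J_M$ with $M=\lfloor N/d\rfloor$ so that the product kernel's frequency support satisfies $\|\bm n\|_1\le d\|\bm n\|_\infty\le dM\le N$, telescoping $J^{\otimes}-\mathrm{Id}$, using $\|J_M\|_{L^\infty\to L^\infty}\le\|k_M\|_{L^1}$, and observing that univariate slices of $f$ inherit the $\mathcal C^{\nu,1}$ bound because pure partial derivatives are among the multi-indices with $|\alpha|\le\nu+1$. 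This correctly concentrates all the real work in the one-dimensional estimate $\|(J_M-\mathrm{Id})g\|_{L^\infty}\le C_\nu M^{-(\nu+1)}\|g\|_{\mathcal C^{\nu,1}}$, which is the periodic Jackson theorem for $g^{(\nu)}\in\mathrm{Lip}\,1$. Of the two devices you offer for it, the iterated chain $E_n(g)\le Cn^{-1}E_n(g')$ down to the Lipschitz level is the cleaner one to make fully rigorous; the vanishing-moment kernel route also works but needs care, since one must keep $\|k_M\|_{L^1}=\bigo(1)$ and $\deg k_M=\bigo(M)$ while annihilating moments, and your $\mathrm{Id}-(\mathrm{Id}-A_m)^{k}$ device achieves the rate via iterated error estimates rather than literally via the single-kernel Taylor argument you write down, so the two formulations in your second paragraph are not interchangeable as stated. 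The only loss relative to the cited result is a constant depending on $d$ and $\nu$ rather than an absolute one; this is immaterial for every use the paper makes of the theorem, where $K$ is always absorbed into $\Lambda_d$-type and $K(H,d)$-type factors. As written this is an acceptable proof sketch; to make it a complete proof you should either carry out the univariate kernel construction in full or cite the one-dimensional Jackson--Favard inequality explicitly and supply only the tensorization argument yourself.
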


\subsection{Our Approach: Foundations}

The idea of the algorithm that we will use to tackle this problem is based on the connection between the decay of the Fourier coefficients and the smoothness of the Bellman operator. Indeed, assuming that the Bellman optimality operator is bounded in the space $\mathcal C_p^{\nu,1}(\Ss \times \As)$, as in Assumption~\ref{ass:smooth}, entails the following desirable property.
Assume that the optimal state-action value function at the step $h+1$ has the form
$Q_{h+1}^*(s,a) = \bm\varphi_N(s,a)^\top \bm \theta_{h+1}.$
Then, by assumption on the Bellmann operator, we have:

$$Q_{h}^*(s,a) = \mathcal T_h Q_{h+1}^*(s,a) = \mathcal T_h \underbrace{\bm\varphi_N(s,a)^\top \bm \theta_{h+1}}_{\in \mathcal C_p^{\nu,1}(\Ss \times \As)} \in \mathcal C_p^{\nu,1}(\Ss \times \As),$$

as $\bm\varphi_N(s,a)^\top \bm \theta_{h+1}$ is a trigonometric polynomial, so it is $\mathcal C_p^{\nu,1}$ a fortiori. Then, also $Q_{h}^*$ can be written as 
$Q_{h}^*(s,a) = \bm \varphi_N(s,a)^\top \bm \theta_{h}+\xi_N(Q_{h}^*)(s,a),$
where $\bm \theta_{h}\in \mathbb \R^{\widetilde N}$ and $\xi_N(Q_{h}^*)(\cdot)$ is small in infinity norm, according to Theorem \ref{thm:liter}. If $\xi_N(Q_{h}^*)(\cdot)$ were exactly zero, this MDP would be said to be linearly Bellmann complete w.r.t. the feature map $\bm \varphi_N(\cdot)$. Instead, due to the presence of this error term, the feature map is said to suffer from a \textit{misspecification}. Despite similar scenarios have been previously studied in the literature \citep{zanette2020learning}, this misspecification significantly impacts the type of guarantees we are able to achieve. In fact, even in the simpler bandit setting (when $H=1$) it is well known \citep{lattimore2020learning} that learning a $\varepsilon-$optimal policy, even with an arbitrary number of samples, is only possible when $\varepsilon > \sqrt{\widetilde N}\|\xi_N(Q_{1}^*)\|_{L^\infty}.$
The fact that the misspecification gets multiplied by $\sqrt{\widetilde N}$ in the sample complexity turns out not to be improvable in the general case~\citep{lattimore2020learning}. Still, this has catastrophic effects on our setting: in order to decrease $\xi_N(Q_{h}^*)$ we have to increase $N$, but this leads to an increase in $\widetilde N$, a trade-off that may be unsolvable.
For example, we know that $\widetilde N=\bigo(N^d)$, while from Theorem~\ref{thm:liter} $\|\xi_N(Q_{h}^*)\|_{L^\infty}=\bigo(N^{\nu+1})$ when $Q_{h}^*\in \mathcal C_p^{\nu,1}(\Ss\times \As)$. So,

$$d>2\nu+2\implies \sqrt{\widetilde N}\|\xi_N(Q_{h}^*)\|_{L^\infty}=\bigo(N^{d/2-\nu-1}) \ge \bigo(1).$$

In this case, in the product $\sqrt{\widetilde N}\|\xi_N(Q_{h}^*)\|_\infty$ cannot become arbitrarily small, so that learning an $\varepsilon-$optimal policy using algorithms that deal with misspecification for general feature maps is impossible.

For this reason, we have to change our perspective. Instead of trying to learn $Q_{h}^*$, we are going to focus on a projected version $\widetilde Q_{h}$. 
In what follows, we will define $\mathcal P_N$ as a projection operator mapping every function $f$ to a trigonometric polynomial of degree $N$ so that $\widetilde Q_{h}$ solves, for every $h=1,\dots H$,
$$\widetilde Q_{h} =\mathcal P_N[\mathcal T_h \widetilde Q_{h+1}](s,a),$$

with $\widetilde Q_{H+1}=Q_{H+1}^*=0$. In this way, the misspecification vanishes completely, as starting from $\widetilde Q_{h+1}(s,a)$, that is a trigonometric polynomial, $\widetilde Q_{h}(s,a)$ is parametrized in the same way. This means that, with respect to the feature map $\bm\varphi_N$, the operator $\mathcal P_N[\mathcal T_h]$ makes no misspecification, and, hopefully, we can achieve sample complexity bounds by estimating $\widetilde Q_{h}(\cdot)$ with classical results for the LSVI algorithm~\citep{munos2005error}.
Two problems need now to be addressed: i) Due to the nature of the problem, we only have access to samples of $s_h,a_h,s_{h+1},r_{h}$ from the generative model. 
How can we approximate the projection operator $\mathcal P_N$ in this setting? ii) How do we know that $\widetilde Q_{h}$ is close to $Q_{h}^*$ for every $h$? In the following subsection, we are going to show that both open points can be solved in a satisfactory way due to the particular properties of our trigonometric feature map so that using a different map in place of $\bm \varphi_N$ would not achieve the same results.

\subsection{Projection by Convolution}\label{sec:proconv}

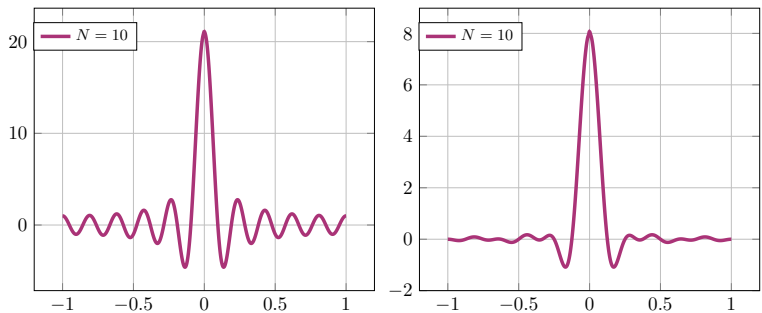
\begin{figure}[t]  
\centering 
\begin{tikzpicture} [scale=0.66]    
\begin{axis}[
    grid = both,
    legend style={at={(0.15,0.95)}, font=\footnotesize,anchor=north},
]
\addlegendentry{$N=10$}
\addplot[domain=-1:1,samples=500,smooth,C4, line width=2pt] {sin((10.5)*180*x)/sin(90*x)};
\end{axis} 
\end{tikzpicture} 
\begin{tikzpicture} [scale=0.66]    
\begin{axis}[
    grid = both,
    legend style={at={(0.15,0.95)}, font=\footnotesize,anchor=north},
]
\addlegendentry{$N=10$}
\addplot[domain=-1:1,samples=500,smooth,C4, line width=2pt] {(sin(180*(8*x))*sin(180*(3*x))/(12*sin(180*(x/2))^2)};
\end{axis} 
\end{tikzpicture}
\caption{A comparison between Dirichlet Kernel (left) and de
la Vall\'ee-Poussin one (right) in one dimension.} \label{fig:M}  
\end{figure}  

Let us suppose to have a black-box function $f: \Omega \to \R$, so that we are able to pick values $x_h \in \Omega$ and receive an unbiased sample $y_h = f(x_h)+\eta_h$, where $\eta_h$ is a zero-mean noise.
One simple trick shows that we can use randomization to obtain samples from functions that are different from $f$, still using the same generative model. Indeed, let $\xi_h$ be a noise with density function $g_{\xi}$. If, after choosing $x_h$, we perturb it with $\xi_h$, the resulting sample has a mean satisfying

$$\E[y_h] = \E[f(x_h+\xi_h)]=[f*g_{\xi}](x_h).$$

This process allows us to obtain unbiased samples from any function $\widetilde f$, as long as it can be expressed as a convolution between $f$ and another function. Thus, coming back to our problem, can we use this trick to obtain unbiased samples from 
$\mathcal P_N[f]$, for any choice of the function $f$?
The answer depends on the choice of the projection operator $\mathcal P_N$ that we want to use. If we are interested in the usual Fourier series projection, which minimizes the $L^2-$distance from the original function, the solution is provided by the Dirichlet kernel $D_N^0$ \citep{katznelson2004introduction}, a real function that, for $d=1$ takes the form $D_N^0(x)=\frac{\sin((N+1/2)\pi x)}{\sin(\pi x/2)}$. The main property of this kernel lays in the fact that, for \textit{any} function $f$, the convolution product $f*D_N^0$ corresponds exactly to the truncation to the order $N$ of the Fourier series associated to $f$. This is a possible solution for our main issue: $\|f-g\|_{L^2}$ is indeed minimized across all the trigonometric polynomials of degree $N$ when $g=f*D_N^0$. 

We can do even better. As we are interested in a trigonometric polynomial that approximates $f$ uniformly (not necessary in $L^2$), we may look for a different kernel that corresponds to a projection that is more suited for our case. This can be done by a slight modification of the Dirichlet kernel, which takes the name of de la Vallée Poussin kernel
\citep{de1918meilleure, de1919leccons,nemeth2016vallee}, 
that we will denote as $D_N$. This function is defined, for every positive even integer $N$, to be an average of Dirichlet Kernels for different orders $D_N(z):=\frac{1}{N/2+1}\sum_{n=N/2}^{N}D_n^0(z)$. The main feature that makes this kernel superior, for our purposes, to the Dirichlet kernel, is that \textit{its $L^1$ norm can be bounded regardless of $N$}. In fact, we have
\begin{equation}\|D_N\|_{L^1}\le \Lambda_d=\bigo(1),\qquad \|D_N^0\|_{L^1}=\bigo(\log(N)^d),\label{eq:boundD}\end{equation}
where the first part is rigorously proved in the appendix (Proposition \ref{prop:valprop}).
The same proposition allows us to show the following result, which expresses an important property of this kernel $D_N$.

\begin{restatable}{thm}{dieci}\label{thm:approxval}
    Let $f\in \mathcal C_p^{\nu,1}([-1,1]^d)$. Then, $D_N*f\in \mathbb T_{N}$, and we have
    $$\|f-D_N*f\|_{L^\infty} \le 2\Lambda_d\inf_{t_{N/2}\in \mathbb T_{N/2}}\|t_{N/2}-f\|_{L^\infty}\le \frac{2^{\nu+2}\Lambda_dK}{N^{\nu+1}}\|f\|_{\mathcal C^{\nu,1}},$$
    where $\mathbb T_N$ denotes the space of trigonometric polynomial of degree not higher than $N$ and $\Lambda_d$ comes from Equation~\eqref{eq:boundD}.
\end{restatable}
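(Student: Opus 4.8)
The plan is to run the classical de la Vallée-Poussin reduction, which splits the statement cleanly into two independent pieces: the first inequality follows purely from two structural properties of the kernel $D_N$ — that it reproduces low-degree trigonometric polynomials and has a uniformly bounded $L^1$ norm — while the second inequality is nothing more than Theorem~\ref{thm:liter} invoked at degree $N/2$.

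First I would record the two facts about $D_N$ supplied by Proposition~\ref{prop:valprop} (together with the definition of $D_N$ as an average of Dirichlet kernels): (i) $\|D_N\|_{L^1}\le\Lambda_d$, with $\Lambda_d=\bigo(1)$ independent of $N$; and (ii) $D_N*t=t$ for every $t\in\mathbb T_{N/2}$, while $D_N*f\in\mathbb T_N$ for every $f\in\mathcal C_p^{\nu,1}$. Property (ii) is exactly why $D_N$ is built as an average of Dirichlet kernels ranging over the degrees $N/2,\dots,N$: each Dirichlet convolution is a Fourier partial sum, which fixes trigonometric polynomials of the corresponding degree, so averaging over this range keeps the image inside $\mathbb T_N$ while still reproducing all of $\mathbb T_{N/2}$. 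I would also note that $\Lambda_d\ge\|D_N\|_{L^1}\ge\left|\int_\Omega D_N\right|=1$, since $D_N$ reproduces constants; this is the observation that lets us replace $1+\Lambda_d$ by $2\Lambda_d$ at the end.

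Next, fix a best $L^\infty$-approximant $t^{\star}\in\mathbb T_{N/2}$ of $f$ (which exists since $\mathbb T_{N/2}$ is finite-dimensional and $f$ is continuous on the compact $\Omega$), so that $\|f-t^{\star}\|_{L^\infty}=\inf_{t_{N/2}\in\mathbb T_{N/2}}\|t_{N/2}-f\|_{L^\infty}$. Using linearity of the circular convolution and property (ii), $D_N*t^{\star}=t^{\star}$, hence
$$f-D_N*f=(f-t^{\star})-D_N*(f-t^{\star}).$$
Taking $L^\infty$ norms and applying Young's inequality for circular convolution, $\|g*h\|_{L^\infty}\le\|g\|_{L^1}\|h\|_{L^\infty}$, to the second term gives
$$\|f-D_N*f\|_{L^\infty}\le\bigl(1+\|D_N\|_{L^1}\bigr)\|f-t^{\star}\|_{L^\infty}\le 2\Lambda_d\inf_{t_{N/2}\in\mathbb T_{N/2}}\|t_{N/2}-f\|_{L^\infty},$$
which is the first inequality. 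The second one is then immediate: Theorem~\ref{thm:liter} at degree $N/2$ yields $\inf_{t_{N/2}\in\mathbb T_{N/2}}\|t_{N/2}-f\|_{L^\infty}\le K(N/2)^{-(\nu+1)}\|f\|_{\mathcal C^{\nu,1}}=2^{\nu+1}KN^{-(\nu+1)}\|f\|_{\mathcal C^{\nu,1}}$, and multiplying by $2\Lambda_d$ gives the claimed bound $\tfrac{2^{\nu+2}\Lambda_dK}{N^{\nu+1}}\|f\|_{\mathcal C^{\nu,1}}$.

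The only genuinely nontrivial ingredient is property (i): the $N$-independent bound $\|D_N\|_{L^1}\le\Lambda_d$. This is the entire reason for preferring the de la Vallée-Poussin kernel to the Dirichlet kernel, whose $L^1$ norm grows like $\log(N)^d$ (cf.\ Equation~\eqref{eq:boundD}), and I would defer it to Proposition~\ref{prop:valprop} in the appendix; the delicate point there is the multivariate bookkeeping — reconciling the construction of $D_N$ with the $\ell_1$-degree definition of $\mathbb T_N$ and bounding the $L^1$ norm of the resulting polyhedral summation kernel uniformly in $N$. Everything else in the argument above is soft: linearity of convolution, Young's inequality, and a single application of the already-established approximation estimate.
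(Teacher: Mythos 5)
Your proposal is correct and follows essentially the same route as the paper's proof: decompose $f-D_N*f$ through a best approximant $t^{\star}\in\mathbb T_{N/2}$, use the reproducing property $D_N*t^{\star}=t^{\star}$ together with Young's inequality $\|D_N*g\|_{L^\infty}\le\|D_N\|_{L^1}\|g\|_{L^\infty}$ and $\Lambda_d\ge 1$ to get the factor $2\Lambda_d$, then invoke Theorem~\ref{thm:liter} at degree $N/2$ to pick up the $2^{\nu+1}$. Your write-up is in fact slightly more explicit than the paper's on two small points (why $\Lambda_d\ge1$, and the bookkeeping of the $2^{\nu+1}$ factor from the degree-$N/2$ application of Theorem~\ref{thm:liter}).
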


Clearly, the second inequality comes directly from Theorem \ref{thm:liter}. This result shows that the operator $D_N*f$ is able to achieve the best approximation of $f$ in $\mathbb T_N$, except for a constant factor $2^{\nu+2}\Lambda_d$.
This last result answers both questions of subsection \ref{sec:proconv}. Indeed, making the convolution $D_N*f$, which can be estimated by adding to $f$ a noise of density proportional to $D_N$, has also the role of a projection operator (as $D_N*f\in \mathbb T_{N}$) and good approximation properties ($\|f-D_N*f\|_{L^\infty}=\bigo(N^{-\nu-1})$).

\begin{algorithm2e}[t]
\everypar={\nl}
\RestyleAlgo{ruled}
\LinesNumbered
\caption{Quasi-optimal Design + Perturbed LSVI}\label{alg:DATA2}\label{alg:LSVI}
\DontPrintSemicolon
\KwData{State-action space $\Zs=[-1,1]^d$, reference value $n_{\text{tot}}$, approximation degree $N$}
\KwResult{Policy $\widehat \pi$.}
Let $\mathcal X_\varphi:=\{\bm \varphi_{N}(s,a):\ (s,a)\in \Zs\}$\label{algline:x}\;

Compute quasi-optimal design $\rho$ for the set $\mathcal X_\varphi$\;\label{algline:qo}

\For{$h = 1,\dots H$}{
    $\mathcal D_h \gets \emptyset$
    
    \For{$s,a\in \text{supp}(\rho)$}{
        Initialize $S_{s,a}$ with $\lceil n_{\text{tot}}\rho(s,a)\rceil$ copies of $s,a$\;\label{algline:collect}

        $\mathcal D_h \gets \mathcal D_h \cup S_{s,a}$ \;
    }
}
Find $D_N^+, D_N^-, \beta_+, \beta_-$ from Equation~\eqref{eq:deco}\label{algline:part2}\;

\For{$h = H,\dots 1$}{
  \For{$z_h=(s_h,a_h)\in \mathcal D_h$}{
    \textcolor{C2}{$\eta_+\sim D_N^+(\cdot)$}\;

    \textcolor{C3}{$\eta_-\sim D_N^-(\cdot)$}\;
    
    Query generative model for \textcolor{C2}{$s_{h+1}^+\sim p_h(\cdot|z_h+\eta_+),\ \ r_{h}^+\sim R_h(z_h+\eta_+)$}\label{algline:q1}\;

    Query generative model for \textcolor{C3}{$s_{h+1}^-\sim p_h(\cdot|z_h+\eta_-),\ \ r_{h}^-\sim R_h(z_h+\eta_-)$}\label{algline:q2}\;

    $Q^\textsc{target}_{h}(s_h,a_h) = \big(\beta_+r_h^+-\beta_-r_h^-\big) \:+\: \big(\beta_+\widehat V_{h+1}(s_{h+1}^+) -\beta_-\widehat V_{h+1}(s_{h+1}^-)\big)$
    \label{algline:unbiased}
    

  }  
  Solve least squares
  $$\widehat{\bm \theta}_h = \mathop{\arg\min}_{\bm \theta \in \R^{\widetilde N}}\sum_{(s_h,a_h)\in \mathcal D_h}\left (\bm \varphi_{N}(s_h,a_h)^\top \bm \theta
  -Q^\textsc{target}_{h}(s_h,a_h)\right)^2$$

  \textbf{Compute $\widehat Q_h(\cdot,\cdot)= {\bm \varphi}_{N}(\cdot,\cdot)^\top \widehat{\bm \theta}_h$}

  Compute next state-value function $\widehat V_h(\cdot)=\max_{a\in \As}\{ \bm \varphi_{N}(\cdot,a)^\top \widehat{\bm \theta}_h\}$  
}
Return $\widehat \pi_h(s):=\argmax_{a\in \As}\widehat Q_h(s,a)$
\end{algorithm2e}

\section{Algorithms}

In this section, we introduce the algorithm that we are going to use to achieve optimal sample complexity. Algorithm \ref{alg:LSVI} takes as input the state-action space dimension $d$, a reference value $n_{\text{tot}}$ for the number of interactions with the simulator that we are going to collect at each stage, and the approximation degree $N$ of the feature map.
The first part of the algorithm, until Line \ref{algline:part2} excluded, is devoted to choosing the state-action couples where to query the generative model. Here, the idea is to define a quasi-optimal design $\rho$ for least-squares regression over the set $\mathcal X_\varphi$ defined in Line \ref{algline:x}. Intuitively, a quasi-optimal design is a distribution over $\mathcal X_\varphi$ with few support points ($\bigo({\widetilde {N}}\log\log({\widetilde {N}}))$) such that its linear features "cover" all the ones in $\mathcal X_\varphi$. As this concept is relatively standard in RL theory, we leave further explanations to Appendix \ref{app:optimaldesign}. Once having chosen the design $\rho$, we collect in the set $S_{s,a}$ every point in the support of $\rho$ for a number of times given by $\lceil n_{\text{tot}}\rho(s,a)\rceil$. Note that, because of the ceiling, the total number of samples is not exactly $n_{\text{tot}}$, but slightly higher. From Line \ref{algline:part2} onward, we have just a perturbed version of the celebrated LSVI algorithm by~\citet{munos2005error}.
The main differences with respect to standard LSVI are highlighted in Algorithm \ref{alg:LSVI} in \textcolor{C2}{green} and \textcolor{C3}{red}. As we can see, instead of just querying the generative model for the elements of the buffers $\mathcal D_h$, we perturb them with a noise sampled from the positive and negative parts $\sim D_N^+, \sim D_N^-$ of the Valle\'e de la Poussin Kernel.
Ideally, this is done because we want unbiased samples from $D_N*\mathcal T_h \widetilde Q_{h+1}$, which can be done by adding a perturbation from $D_N(\cdot)$ to the state-action couple. However, we cannot take random samples from something that is not a density function, so we have to decompose $D_N$ in the following way:

\begin{equation}D_N(\cdot) = \beta_+D_N^+(\cdot)- \beta_-D_N^-(\cdot),\label{eq:deco}\end{equation}


with $D_N^+(z),D_N^-(z)$ being the positive and negative parts of the kernel, both normalized to have integral one, and $\beta^+,\beta^-$ are constants such that the equality is verified. 
Now, we can just take samples from $D_N^+, D_N^-$ and apply Equation~\eqref{eq:deco} to have an unbiased estimator (Line~\ref{algline:unbiased}) for $D_N*\mathcal T_h \widehat Q_{h+1}(s_h,a_h)$, which is what we need to keep our Q-function estimates within the vector space spanned by the features $\bm \varphi_{N}(\cdot,\cdot)$.
Note that, despite the modifications, as long as we have a good random number generator, the computational complexity of the algorithm does not increase significantly compared to LSVI: finding $\rho$ is computationally feasible, as shown by \citet[][Lemma 3.9]{todd2016minimum}, and the second part has the same complexity of LSVI. Additional reasonings of computational efficiency are left to the appendix \ref{app:compu}.

As anticipated, the reference value $n_{\text{tot}}$ does not correspond exactly to the number of samples that we take for each time-step $h$. In fact, we have that the actual number of samples corresponds to
$\sum_{s,a \in \text{supp}(\rho)}\lceil n_{\text{tot}}\rho(s,a)\rceil \le |\text{supp}(\rho)|+\sum_{s,a \in \text{supp}(\rho)} n_{\text{tot}}\rho(s,a)=|\text{supp}(\rho)|+n_{\text{tot}}.$ From this reasoning, we have the following proposition.

\begin{prop}\label{prop:totalsamp}
    The total number of samples required by Algorithm~\ref{alg:LSVI} in every phase %
    is~$2(n_{\text{tot}}+4{\widetilde N}\log\log({\widetilde N}))$.
\end{prop}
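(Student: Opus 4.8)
The plan is to directly count the samples consumed at a single stage $h$, then observe that the same count applies to every stage. First I would recall the data-collection loop (Lines~\ref{algline:collect}--onward): for each support point $(s,a)\in\text{supp}(\rho)$ the algorithm places $\lceil n_{\text{tot}}\rho(s,a)\rceil$ copies of $(s,a)$ into $\mathcal D_h$. Hence the number of state-action pairs stored in $\mathcal D_h$ is exactly $\sum_{(s,a)\in\text{supp}(\rho)}\lceil n_{\text{tot}}\rho(s,a)\rceil$. Using the elementary bound $\lceil x\rceil\le x+1$ termwise, together with $\sum_{(s,a)\in\text{supp}(\rho)}\rho(s,a)=1$ (since $\rho$ is a distribution over $\mathcal X_\varphi$), this sum is at most $|\text{supp}(\rho)|+n_{\text{tot}}$, which is the inequality already displayed just before the statement.

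Next I would invoke the size bound on the support of the quasi-optimal design: as recalled in the description of Algorithm~\ref{alg:LSVI}, the quasi-optimal design over $\mathcal X_\varphi\subset\R^{\widetilde N}$ can be taken with $|\text{supp}(\rho)|\le \bigo(\widetilde N\log\log\widetilde N)$; with the explicit constant used by the paper (cf. the quasi-optimal design construction in Appendix~\ref{app:optimaldesign}, following~\citet{todd2016minimum}), one has $|\text{supp}(\rho)|\le 2\widetilde N\log\log\widetilde N$. Substituting this into the previous bound gives that $|\mathcal D_h|\le n_{\text{tot}}+2\widetilde N\log\log\widetilde N$.

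Finally I would account for the factor $2$ coming from the \emph{perturbed} part of the algorithm: in the backward loop (Lines~\ref{algline:q1}--\ref{algline:q2}), for \emph{each} element $z_h\in\mathcal D_h$ the algorithm issues \emph{two} queries to the generative model, one at the positively perturbed point $z_h+\eta_+$ and one at the negatively perturbed point $z_h+\eta_-$. Therefore the number of calls to the generative model at stage $h$ is $2|\mathcal D_h|\le 2\bigl(n_{\text{tot}}+2\widetilde N\log\log\widetilde N\bigr)=2\bigl(n_{\text{tot}}+4\widetilde N\log\log\widetilde N\bigr)/\,$—wait, more precisely $2|\mathcal D_h| \le 2n_{\text{tot}}+4\widetilde N\log\log\widetilde N$, and the slightly looser form $2(n_{\text{tot}}+4\widetilde N\log\log\widetilde N)$ stated in the proposition is an upper bound on this, which is what we report. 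Since the argument at stage $h$ used nothing specific to $h$ (every $\mathcal D_h$ is built by the same design $\rho$ with the same $n_{\text{tot}}$), the bound holds identically in every phase, completing the proof.

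There is essentially no hard step here: the only non-routine ingredient is the $\bigo(\widetilde N\log\log\widetilde N)$ bound on $|\text{supp}(\rho)|$, which is imported wholesale from the optimal-design literature and discussed in Appendix~\ref{app:optimaldesign}; everything else is the ceiling inequality and bookkeeping of the two perturbed queries per data point.
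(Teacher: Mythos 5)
Your proof is correct and follows essentially the same route as the paper's: the ceiling inequality gives $|\mathcal D_h|\le n_{\text{tot}}+|\text{supp}(\rho)|$, the quasi-optimal design bounds the support size, and the factor $2$ comes from the two perturbed queries per data point. The only minor slip is that the paper's support bound (Theorem~\ref{thm:KW}) is $|\text{supp}(\rho)|\le 4\widetilde N\log\log\widetilde N$ rather than $2\widetilde N\log\log\widetilde N$, which makes the stated bound $2(n_{\text{tot}}+4\widetilde N\log\log\widetilde N)$ exact rather than loose; this does not affect the validity of your argument.
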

\begin{proof}
    The factor $n_{\text{tot}}+4{\widetilde N}\log\log({\widetilde N})$ directly comes from $n\le n_{\text{tot}}+|\text{supp}(\rho)|$ and the bound on the support given by definition of quasi-optimal design (Proposition \ref{thm:KW}). The factor $2$ comes from the double sampling in Lines \ref{algline:q1} and \ref{algline:q2}.
\end{proof}

\subsection{Theoretical Guarantees}

In this section, we introduce the main result of this paper, which ensures that our algorithm can be used on a Smooth MDP to achieve order-optimal sample complexity.

\begin{restatable}{thm}{mainthm}\label{thm:LSVIbound}
    Fix $\varepsilon>0,\delta>0$. Apply Algorithm \ref{alg:LSVI} on a $\nu-$Smooth MDP. Then, for a proper choice of $N=\bigo(\varepsilon^{-\frac{1}{\nu+1}})$, with probability at least $1-\delta$, we are able to learn a policy $\widehat \pi$ that is $\varepsilon-$optimal with a number of samples given by
    $$n= \bigot \left (K(H,d)\varepsilon^{-\frac{d+2(\nu+1)}{\nu+1}}\right ).$$
\end{restatable}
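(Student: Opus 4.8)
The plan is to run the standard error-propagation analysis for LSVI (following \citet{munos2005error}) but tracking two sources of error at each backward step: the statistical error from the least-squares regression, and the approximation error introduced by the convolutional projection $\mathcal{P}_N = D_N * (\cdot)$. Concretely, I would define the idealized sequence $\widetilde Q_h = \mathcal{P}_N[\mathcal{T}_h \widetilde Q_{h+1}]$ (with $\widetilde Q_{H+1}=0$) as in Section~\ref{sec:proconv}, which lives exactly in $\mathbb{T}_N^d$ with no misspecification. First I would show by backward induction, using Assumption~\ref{ass:smooth} together with Theorem~\ref{thm:approxval}, that $\|\widetilde Q_h - Q_h^*\|_{L^\infty}$ stays controlled: each application of $\mathcal{T}_h$ can inflate the $\mathcal{C}^{\nu,1}$-norm by $C_{\mathcal T}$, and each projection costs an additive $O(N^{-(\nu+1)})$ in sup-norm, so a telescoping argument gives $\|\widetilde Q_1 - Q_1^*\|_{L^\infty} = O(H \cdot C_{\mathcal T}^{H} N^{-(\nu+1)})$ (or better, depending on how the norms are bookkept — this is where the constant $K(H,d)$ hiding the $H$-dependence comes from). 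Choosing $N = \Theta(\varepsilon^{-1/(\nu+1)})$ makes this contribution $O(\varepsilon)$.

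Next I would bound the estimation error $\|\widehat Q_h - \widetilde Q_h\|_{L^\infty}$. The key point is that the perturbed target in Line~\ref{algline:unbiased} is, by the convolution trick and the decomposition \eqref{eq:deco}, an unbiased estimator of $(D_N * \mathcal{T}_h \widehat Q_{h+1})(s_h,a_h)$, which is the regression target with feature map $\bm\varphi_N$ and \emph{no} misspecification. Its variance is finite and controlled by $\beta_+ + \beta_-$, i.e.\ by $\|D_N\|_{L^1} \le \Lambda_d$ (Proposition~\ref{prop:valprop}) and the range of $\widehat V_{h+1}$, which is $O(H)$. Using the quasi-optimal (Kiefer–Wolfowitz) design $\rho$ from Lines~\ref{algline:qo}–\ref{algline:collect}, standard least-squares concentration (a Bernstein/Hoeffding bound for the per-coordinate errors plus a union bound over the $O(\widetilde N \log\log \widetilde N)$ design points, then transferring to all of $\Zs$ via the design's covering property $\sup_{z}\bm\varphi_N(z)^\top M_\rho^{-1}\bm\varphi_N(z) \le \widetilde N$) gives, with probability $1-\delta/H$, a pointwise bound of order $\sqrt{\widetilde N \log(\widetilde N H/\delta)/n_{\text{tot}}}$ times the noise scale, hence $\|\widehat Q_h - \mathcal{P}_N\mathcal{T}_h\widehat Q_{h+1}\|_{L^\infty} = \widetilde O(H\Lambda_d \sqrt{\widetilde N / n_{\text{tot}}})$. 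Propagating this through the $H$ stages (again a telescoping/contraction-free argument, absorbing $\max_a$ into sup-norms) and union-bounding over $h$, I get $\|\widehat Q_1 - \widetilde Q_1\|_{L^\infty} = \widetilde O(\mathrm{poly}(H) \sqrt{\widetilde N / n_{\text{tot}}})$.

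Finally I would combine the two pieces: $\|\widehat Q_1 - Q_1^*\|_{L^\infty} \le \|\widehat Q_1 - \widetilde Q_1\|_{L^\infty} + \|\widetilde Q_1 - Q_1^*\|_{L^\infty}$, convert the $Q$-error into a policy-suboptimality bound $\|V_1 - V_1^{\widehat\pi}\|_{L^\infty} \le 2H\|\widehat Q_1 - Q_1^*\|_{L^\infty}$ (the standard greedy-policy lemma, with the extra $H$ from error accumulation along the trajectory), set both terms to $O(\varepsilon)$, and solve for $n_{\text{tot}}$. With $N = \Theta(\varepsilon^{-1/(\nu+1)})$ we have $\widetilde N = \Theta(N^d) = \Theta(\varepsilon^{-d/(\nu+1)})$, so requiring $\widetilde O(\sqrt{\widetilde N/n_{\text{tot}}}) = O(\varepsilon)$ forces $n_{\text{tot}} = \widetilde O(\widetilde N \varepsilon^{-2}) = \widetilde O(\varepsilon^{-d/(\nu+1) - 2}) = \widetilde O(\varepsilon^{-(d + 2(\nu+1))/(\nu+1)})$; by Proposition~\ref{prop:totalsamp} the per-phase sample count is of the same order, and multiplying by $H$ stages only affects $K(H,d)$. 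The main obstacle I anticipate is the error-propagation bookkeeping: because $\mathcal{T}_h$ is only \emph{bounded} (not a contraction) on $\mathcal{C}_p^{\nu,1}$, one must be careful that the $\mathcal{C}^{\nu,1}$-norms of the iterates $\widetilde Q_h$ do not blow up faster than polynomially in $H$ — this is what controls both the approximation error in step one and the variance of the regression targets (since larger-norm $\widehat Q_{h+1}$ means larger $\widehat V_{h+1}$ range), and getting a clean bound here is the technical crux that determines the exact form of $K(H,d)$.
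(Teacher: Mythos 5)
Your overall architecture matches the paper's: the same decomposition into an approximation error ($\widetilde Q_h$ vs.\ $Q_h^*$, controlled via Theorem~\ref{thm:approxval} and the $\mathcal C^{\nu,1}$-norm bookkeeping of Theorem~\ref{thm:approxQ}) and an estimation error ($\widehat Q_h$ vs.\ $\widetilde Q_h$, controlled via the unbiasedness of the perturbed targets and the quasi-optimal design), followed by the standard greedy-policy conversion. However, there is a genuine gap in the step where you pass from per-point concentration to the uniform bound $\|\widehat Q_h - D_N*\mathcal T_h\widehat Q_{h+1}\|_{L^\infty(\Zs)} = \bigot(\sqrt{\widetilde N/n_{\text{tot}}})$. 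A ``union bound over the $O(\widetilde N\log\log\widetilde N)$ design points'' only controls the error at the points where you \emph{sampled}, not at the uncountably many points where you must \emph{evaluate} $\widehat Q_h$ (the next states $s_{h+1}$ land anywhere in $\Ss$). The design property $\sup_{z}\|\bm\varphi_N(z)\|_{\Sigma^{-1}}^2\le 2\widetilde N$ bounds the subgaussian parameter of the prediction error at each \emph{individual} $z$, but does not by itself yield a simultaneous high-probability guarantee over all of $\Zs$. The two naive fixes both fail: a self-normalized bound on $\|\widehat{\bm\theta}_h-\bm\theta_h\|_{V_n}$ costs an extra $\sqrt{\widetilde N}$, forcing $n_{\text{tot}}=\bigot(\widetilde N^2\varepsilon^{-2})$ and destroying the optimal exponent; and a direct union bound over an $\varepsilon$-cover of $\Zs$ requires first knowing that the error function $\widehat Q_h - D_N*\mathcal T_h\widehat Q_{h+1}$ is Lipschitz with a controlled constant, which is not automatic.

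The paper resolves this with a two-pass argument that your sketch is missing: it first derives a \emph{crude} uniform bound (the second item of Proposition~\ref{prop:regression}, carrying an extra $\sqrt{n}$ factor but valid over all of $\Xs$ via Cauchy--Schwarz on the noise), uses it together with Parseval's identity to bound $\|\widehat{\bm\theta}_h\|_2$ and hence the $\mathcal C^{\nu,1}$-norm of $\widehat Q_h$ (the feature derivatives grow like $N^{\nu+1}$), and only then applies the sharp per-point bound on a finite $\varepsilon_k$-cover whose cardinality enters only through $\log k$. You would need to supply this (or an equivalent chaining argument) to make the estimation-error bound rigorous at the claimed rate. A secondary inaccuracy: the backward propagation of the estimation error is amplified by $\Lambda_d=\|D_N\|_{L^1}>1$ at every stage (the paper uses the schedule $\varepsilon_h=\varepsilon/(3\Lambda_d)^h$), so $\|\widehat Q_1-\widetilde Q_1\|_{L^\infty}$ is exponential, not polynomial, in $H$ times the per-stage error; this is absorbed into $K(H,d)$ and does not affect the exponent of $\varepsilon$, but your ``$\mathrm{poly}(H)$'' claim is not what the analysis actually yields.
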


The full proof is very long, and we postpone it to Appendix \ref{app:finalproof}.
Unfortunately, the constant $K(H,d)$ from the previous theorem is exponential in both $H$ and $d$. In the next section, after proving that the exponent ${-\frac{d+2(\nu+1)}{\nu+1}}$ is tight, and no better result can be achieved, we also discuss this drawback.

\subsection{Tightness of the result and comparison with state of the art}\label{sec:tight}

In this section, we discuss the result of Theorem \ref{thm:LSVIbound}. We start from the question about the tightness of the order of the sample complexity, for which a lower bound from the field of stochastic optimization proves that our exponent is the best possible.

\begin{thm}\label{thm:wang}(\cite{wang2018optimization} Theorem 2 + Proposition 3 for $\beta=0$)
    Consider the optimization of an unknown function $f\in \mathcal C^{\nu,1}([-1,1]^d)$ via an algorithm that can only access $f$ trough noisy samples of the form $y_t=f(x_t)+\eta_t$, where $x_t$ are points chosen by the algorithm and $\{\eta_t\}_{t=1}^n$ is a sequence of independent $1-$subgaussian noises. Then, the expected number of samples required to find a $\varepsilon-$optimal point $\widehat x$ satisfies $n = \Omega\Big(\varepsilon^{-\frac{d+2(\nu+1)}{\nu+1}}\Big)$.     
\end{thm}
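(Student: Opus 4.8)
The statement is a minimax lower bound for zeroth-order stochastic optimization over the ball of $\mathcal C^{\nu,1}([-1,1]^d)$, and the natural route is Fano's method. I would exhibit a finite family of functions in the class that are statistically hard to tell apart from noisy point evaluations but have far-apart minimizers, so that returning an $\varepsilon$-optimal point forces any algorithm to identify which member of the family it is facing. (This is precisely the argument of \citet{wang2018optimization}; I recall its structure.)

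\textbf{The hard family.} Fix a $\mathcal C^\infty$ bump $\psi\ge 0$ supported in $(-1/2,1/2)^d$ with $\psi(0)=1$ and $\|\psi\|_{\mathcal C^{\nu,1}}\le C_\psi$. Partition $[-1,1]^d$ into $M^d$ congruent subcubes of side $2/M$ with centers $c_1,\dots,c_{M^d}$, where $M$ is an integer to be tuned, and set $f_j(x):=-2\varepsilon\,\psi\big(M(x-c_j)\big)$, supported strictly inside the $j$-th subcube, together with the null instance $f_0\equiv 0$. Since a partial derivative of $f_j$ of order $|\alpha|$ carries a factor $M^{|\alpha|}$, one gets $\|f_j\|_{\mathcal C^{\nu,1}}\le 2\varepsilon\,C_\psi\,M^{\nu+1}$, which stays below the prescribed norm bound precisely when $M=\Theta(\varepsilon^{-1/(\nu+1)})$; meanwhile $\|f_j\|_{L^\infty}=2\varepsilon$. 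This calibration — amplitude $\asymp\varepsilon$ against width $\asymp 1/M$ on a $\mathcal C^{\nu,1}$ function — is exactly what turns the smoothness order $\nu+1$ into a number of cells $M^d=\Theta(\varepsilon^{-d/(\nu+1)})$.

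\textbf{Reduction to identification, and Fano.} For $j\ge 1$ we have $\min_x f_j(x)=-2\varepsilon$, attained at $c_j$, while $f_j\le 0$ and vanishes outside the $j$-th cube; hence any $\widehat x$ with $f_j(\widehat x)\le\min f_j+\varepsilon=-\varepsilon$ lies inside that cube, so an $\varepsilon$-optimal point recovers $j$. Put the uniform prior on $j\in[M^d]$. Under $f_j$, a query outside the $j$-th cube has the same conditional law as under $f_0$, so only queries landing in cube $j$ carry information, each contributing at most $\tfrac12(2\varepsilon)^2=2\varepsilon^2$ to the relative entropy (the noise is $1$-subgaussian; take Gaussian noise for the bound). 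The divergence decomposition for adaptive sampling strategies then gives $\mathrm{KL}(\Prob_0\Vert\Prob_j)\le 2\varepsilon^2\,\mathbb{E}_0[N_j]$, with $N_j$ the number of queries in cube $j$ and $\sum_j N_j\le n$. Averaging over $j$ and applying Fano's inequality, constant-probability identification of $j$ requires
$$\frac{2\varepsilon^2}{M^d}\sum_{j=1}^{M^d}\mathbb{E}_0[N_j]\;\ge\;c_1\log(M^d),\qquad\text{hence}\qquad n\;\ge\;c_2\,\frac{M^d\log M}{\varepsilon^2}.$$
Substituting $M^d=\Theta(\varepsilon^{-d/(\nu+1)})$ yields $n=\Omega\big(\varepsilon^{-d/(\nu+1)-2}\log(1/\varepsilon)\big)=\Omega\big(\varepsilon^{-(d+2(\nu+1))/(\nu+1)}\big)$, as claimed (absorbing the logarithm). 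Finally, a fixed-budget lower bound on the error probability converts to the stated bound on the \emph{expected} number of samples by truncating any anytime algorithm at a constant multiple of its expected sample count and invoking Markov's inequality.

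\textbf{Main obstacle.} The two delicate points are (i) the divergence decomposition: since $x_t$ depends on $(y_1,\dots,y_{t-1})$ one cannot tensorize naively and must track the likelihood ratio of the whole interaction transcript — the standard bandit lower-bound lemma handles this, but the direction of the KL and the fact that the expected visit counts are taken under the null instance must be checked carefully; and (ii) the joint calibration of the amplitude and of $M$, which must keep every $f_j$ inside $\mathcal C^{\nu,1}$ with the prescribed norm while leaving an $L^\infty$ optimality gap of exact order $\varepsilon$. Everything after that is bookkeeping.
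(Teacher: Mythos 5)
This theorem is not proved in the paper at all: it is imported verbatim from \citet{wang2018optimization} (their Theorem 2 combined with Proposition 3 at $\beta=0$) and used only as a benchmark to certify tightness of Theorem~\ref{thm:LSVIbound}. Your reconstruction is the standard bump-function lower bound and is essentially the argument underlying the cited result: the calibration $M=\Theta(\varepsilon^{-1/(\nu+1)})$ against amplitude $\Theta(\varepsilon)$ is exactly what converts Hölder order $\nu+1$ into $M^d=\Theta(\varepsilon^{-d/(\nu+1)})$ indistinguishable cells, and the per-query information of $O(\varepsilon^2)$ then forces $n=\Omega(M^d\varepsilon^{-2})$, which is the claimed exponent.

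The one place where your write-up, taken literally, does not close is the step you yourself flag: you derive $\mathrm{KL}(\Prob_0\Vert\Prob_j)\le 2\varepsilon^2\,\mathbb{E}_0[N_j]$ (correct, by the divergence decomposition for adaptive transcripts, with visit counts under the \emph{null}), but Fano's mutual-information bound wants the average of $\mathrm{KL}(\Prob_j\Vert Q)$ \emph{from} the alternatives, whose decomposition involves $\mathbb{E}_j[N_j]$; and $\sum_j\mathbb{E}_j[N_j]$ is not bounded by $n$ — a good algorithm concentrates its budget in cube $j$ under $\Prob_j$, so this sum can be of order $nM^d$ and the factor $M^d$ in the denominator is lost. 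The standard repair, which matches the statement (which carries no $\log$ factor anyway), is pairwise testing rather than Fano: since $\sum_j\mathbb{E}_0[N_j]\le n$, at least half the indices satisfy $\mathbb{E}_0[N_j]\le 2n/M^d$, and for each such $j$ the Bretagnolle--Huber inequality applied to the event $\{\widehat x\in\text{cube }j\}$ (whose $\Prob_0$-probabilities sum to at most $1$) forces $n\gtrsim M^d/\varepsilon^2$ for constant success probability. With that substitution, and your Markov truncation to pass from a fixed-budget error bound to a bound on the expected sample count, the argument is complete and recovers the theorem as stated.
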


Note that the setting of the previous theorem is far easier than ours. First, the previous theorem holds in expectation, while our Theorem \ref{thm:LSVIbound} is valid in high probability. Second, and more importantly, the RL setting with a generative model reduces to stochastic optimization when $H=1$, so that the reward function $r_1(\cdot)$ corresponds to the function $f(\cdot)$ to be optimized in Theorem~\ref{thm:wang}. Therefore, Theorem \ref{thm:wang} implies \textit{a fortiori} a lower bound of order $n = \Omega\Big(\varepsilon^{-\frac{d+2(\nu+1)}{\nu+1}}\Big)$ in our setting, which matches our result for what concerns the dependence on $\varepsilon$. This result is completely new in the literature of RL with continuous spaces. Indeed, by converting regret bounds into sample complexity bounds, on the one hand, we have algorithms for Lipschitz MDPs \citep{song2019efficient,sinclair2019adaptive,le2021metrics}, which achieve only $\bigot(\varepsilon^{-d-2})$ whatever the smoothness of the process, on the other hand, \textsc{Legendre-Eleanor} \citep{maran2023noregret} which only works for $d<2\nu+2$ and gives $\bigot(\varepsilon^{-\frac{d+2(\nu+1)}{-d/2+\nu+1}})$, which is never tight.

Now, we move on to explore the dependence of the sample complexity on the task horizon $H$ and the dimensionality of the state-action space $d$. 
The exponential dependence found in our Theorem \ref{thm:LSVIbound} is not encouraging, but we can prove that this annoying phenomenon cannot be avoided in the whole family of Lipschitz MDPs (defined in the Appendix \ref{app:tight}), which are a subset of the Smooth MDPs for $\nu=0$ \citep{maran2023noregret}.
Indeed, for what concerns the time horizon $H$ and the dimension $d$, the following theorem, which is analogous to Theorem 12 in \citet{maran2023noregret}, shows that for any setting containing Lipschitz MDP, the exponential dependence on both $d$ and $H$ is necessary.

\begin{thm}\label{thm:vecio}
    Every algorithm for Lipschitz MDPs requires a sample complexity of order $\Omega(C^{dH}\varepsilon^{-d-2})$, for some constant $C>1$ depending on the process.
\end{thm}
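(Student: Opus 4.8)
The plan is to prove a lower bound of order $\Omega(C^{dH}\varepsilon^{-d-2})$ for learning in Lipschitz MDPs by constructing a hard instance that combines two separate sources of hardness: a \emph{spatial} multiscale structure that forces the $\varepsilon^{-d-2}$ dependence (this is the classical continuous-bandit/stochastic-optimization lower bound, essentially Theorem~\ref{thm:wang} with $\nu=0$), and a \emph{temporal} tree structure that forces the exponential blow-up $C^{dH}$. The construction is adapted from Theorem 12 of \citet{maran2023noregret}; I would build a family of MDPs indexed by a hidden parameter, argue an information-theoretic indistinguishability bound, and conclude via a standard Le Cam / Fano two-point (or multi-point) argument.

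\textbf{Step 1: the spatial gadget.} At a single stage, place a ``bump'' reward function on $[-1,1]^d$: a function that is $0$ outside a small ball of radius $r$ and rises to height $\propto r$ (so that it is $1$-Lipschitz) centered at an unknown point. Distinguishing which of $\Theta(r^{-d})$ disjoint bumps is active, to accuracy $\varepsilon$, requires $r \asymp \varepsilon$ and, because the signal-to-noise ratio of each query is $\Theta(r^2)=\Theta(\varepsilon^2)$ against unit-variance noise, $\Omega(\varepsilon^{-d-2})$ samples. This is exactly the mechanism behind Theorem~\ref{thm:wang}; I would invoke it essentially as a black box, noting that the bump reward is Lipschitz and hence fits the Lipschitz-MDP assumption.

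\textbf{Step 2: the temporal gadget.} Now chain $H$ stages so that the agent must ``commit'' to a region of the state space at each stage in order to have any chance of collecting reward downstream. Concretely, partition the state space into $\Theta(C^d)$ cells; the transition dynamics are designed so that from a given cell at stage $h$ the agent can only reach a fixed sub-collection of cells at stage $h+1$, and only one root-to-leaf path through this depth-$H$, branching-factor-$\Theta(C^d)$ tree carries nonzero reward. Identifying the rewarding leaf requires exploring $\Omega(C^{dH})$ leaves, and — crucially — the transitions must be made Lipschitz, which is where one pays the constant $C>1$: to interpolate smoothly between ``stay'' and ``move'' behaviors one needs transition kernels that vary over a length scale bounded below, limiting the branching factor to a fixed power of a constant rather than something growing with $1/\varepsilon$. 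One then plants, at the rewarding leaf, a copy of the spatial gadget of Step~1, so that even an agent that has found the right path still needs $\Omega(\varepsilon^{-d-2})$ samples to localize the final bump.

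\textbf{Step 3: information-theoretic lower bound.} With the family of instances in hand (indexed by the choice of rewarding path together with the choice of active bump at the leaf), I would run a standard argument: any algorithm using $n$ generative-model queries induces a distribution over transcripts; if $n \ll C^{dH}\varepsilon^{-d-2}$ then by a KL/chi-square computation the transcript distributions under two instances with different hidden parameters are statistically close (each query reveals $O(\varepsilon^2)$ bits about the leaf bump and $O(1)$ bits about the path, but there are $\Omega(C^{dH})$ paths and $\Omega(\varepsilon^{-d})$ bumps), so no algorithm can output an $\varepsilon$-optimal policy with probability $\ge 1-\delta$. Le Cam's two-point method suffices if one is willing to merge the two hardness sources into a single binary hypothesis test with the right parameter scaling; otherwise Fano over the full index set gives the clean product $C^{dH}\varepsilon^{-d-2}$.

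\textbf{The main obstacle} I anticipate is \emph{making the temporal tree construction genuinely Lipschitz} while preserving the branching structure: naively, a hard tree of depth $H$ wants sharp, cliff-like transition kernels (so that a slightly-wrong action at stage $h$ sends you irrecoverably off the rewarding path), but Lipschitz continuity of $p_h(\cdot\mid s,a)$ in $(s,a)$ forbids such cliffs. The resolution — which is the technical heart of the argument and why the bound reads $C^{dH}$ rather than, say, $(1/\varepsilon)^{dH}$ — is to let each ``decision'' at a stage consume a constant-width slab of the state-action space, so the number of distinguishable branches per stage is a dimension-dependent constant $\Theta(C^d)$; one must then carefully verify that the composed $H$-stage kernel keeps the reward-carrying path's return bounded away from that of the others by $\Omega(1)$ (or $\Omega(\varepsilon)$ at the final localization step) uniformly, and that all reward and transition functions assembled across the $H$ stages remain globally Lipschitz with a constant independent of $H$, $d$, and $\varepsilon$. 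I would handle this by an explicit piecewise-linear (or mollified indicator) construction and a layer-by-layer induction on the reachable-cell sets, which is routine in spirit but delicate in the bookkeeping.
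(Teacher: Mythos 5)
Your construction takes a genuinely different route from the paper's, and it has a gap that I do not think your proposed fix repairs. The paper's proof does not use a tree of $C^{dH}$ paths at all: it embeds a \emph{single} Lipschitz bandit instance whose Lipschitz constant is $L_p^{H-2}$ (exponential in $H$) into an MDP whose own constants are bounded independently of $H$, by letting the dynamics $s'=L_p s$ \emph{amplify} an exponentially small signal $\widetilde f(a_1,a_2)$ over $H-2$ steps until it becomes an $\Omega(1)$ difference in return; the known $\Omega(L^d\varepsilon^{-d-2})$ bandit lower bound then yields $\Omega(L_p^{(H-2)d}\varepsilon^{-d-2})$ directly, with no Fano argument over a product index set. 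Your temporal gadget, by contrast, asks the agent to identify one rewarding root-to-leaf path among $C^{dH}$, and this is exactly the kind of claim that fails under \emph{generative-model} access: the agent is not forced to traverse the tree from the root. If, as your "resolution" suggests, each decision consumes a constant-width slab so that each stage has only $\Theta(C^d)$ distinguishable cells, then the entire MDP (all transition kernels and rewards) is describable by $O(HC^{2d})$ quantities, and an agent with a generative model can query each alive cell-action pair at each stage directly, learn which slab is "correct" at stage $h$ in $O(C^d)$ queries, and solve the problem with $O(HC^d+\varepsilon^{-d-2})$ samples --- no exponential factor survives. The alternative, letting the cells at stage $h$ encode the full history so that there really are $C^{dh}$ of them, forces their diameter down to $C^{-h}$ in a compact state space; a $1$-Lipschitz terminal reward can then separate the rewarding leaf from its competitors by only $O(C^{-H})$, and the whole thing again collapses to a single Lipschitz-bandit search over the final stage plus a cheap backward induction.

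The missing idea is precisely that the exponential hardness must come from an \emph{expanding} map (transition Lipschitz constant $L_p>1$), which makes the decision-relevant preimage at the early stages exponentially small, of width $\varepsilon L_p^{-(H-2)}$, so that even a generative-model agent faces a needle-in-a-haystack search over $\Omega(L_p^{(H-2)d}\varepsilon^{-d})$ candidate regions, each costing $\Omega(\varepsilon^{-2})$ noisy queries to test. Your contracting slab construction points the inequality in the wrong direction. Your Step 1 (the bump gadget and the $\varepsilon^{-d-2}$ rate) and Step 3 (the information-theoretic machinery) are fine in themselves, but they would need to be wired to the amplification mechanism rather than to a discrete tree; once that is done, the two "sources of hardness" are no longer independent factors to be multiplied via Fano --- they are a single rescaled bandit instance, which is both simpler and actually correct.
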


Proof can be found in the Appendix \ref{app:tight}.
Note that all the results in the state of the art obtaining bounds that are sub-exponential in $H$ often hide this dependence in a problem-dependent constant. For example, Theorem 4.1 in \citet{sinclair2019adaptive} provides a regret bound depending on a constant $c$, that for Lipschitz MDPs reveals in Proposition 2.5 to be exponential in $H$. 
For what concerns the dependence on $H$, 
this result constitutes a significant problem, as it completely prevents Lipschitz MDPs with long horizons from being solved efficiently without further assumptions. 

\paragraph{Extension for misspecified $\nu$.}

As we have proved in Theorem \ref{thm:LSVIbound}, our algorithm is able to achieve optimal sample complexity for every $\nu-$Smooth MDP. Still, to achieve this result, an exact knowledge of the smoothness parameter $\nu$ is required, as it is not possible to compute the optimal number of features $N$ without it. In many practical scenarios, the knowledge of the exact $\nu$ is rather unrealistic, so we study what happens in case the algorithm is run for a parameter $\nu_{\text{miss}}$ that is different from the correct value $\nu$. Taking $\nu_{\text{miss}}>\nu$ leads to the impossibility of guaranteeing the learning of an $\varepsilon-$optimal policy. Since, as we prove in the appendix (theorem \ref{thm:approxQ}), the error $\|\widetilde Q_h- Q_h^*\|_\infty\le \Psi_h N^{-\nu-1}$, taking $N=\bigo(\varepsilon^{-\frac{1}{\nu_{\text{miss}}+1}})$ as prescribed by theorem \ref{thm:LSVIbound}, leads to

$$\forall \varepsilon<1\qquad \|\widetilde Q_h- Q_h^*\|_\infty = \bigo \left(\varepsilon^{\frac{\nu+1}{\nu_{\text{miss}}+1}}\right)>\bigo(\varepsilon).$$

Clearly, this kind of approximation does not allow us to learn a $\varepsilon-$optimal policy, even if $\widetilde Q_h$ were known exactly. In the opposite case, when $\nu_{\text{miss}}<\nu$, we can just exploit the fact that $\mathcal C^{\nu,1}(\Ss,\As)\subset \mathcal C^{\nu_{\text{miss}},1}(\Ss,\As)$ and apply the bound we would obtain if the MDP were only $\nu_{\text{miss}}-$Smooth. This writes as
$$n=\bigot\Big(\varepsilon^{-\frac{d+2(\nu_{\text{miss}}+1)}{\nu_{\text{miss}}+1}}\Big),$$
which is obviously worse than the one for $\nu$, but still finite.

\section{Conclusions and future works}

In this work, we have significantly enlarged the class of continuous-spaces MDPs for which sample-efficient learning with a generative model is possible. We focus on the Smooth MDP setting, defined first \cite{maran2023noregret}, which includes the majority of the classes of continuous spaces MDPs for which sample efficient RL is possible. 
To solve this very general problem, we design an algorithm that is able to achieve optimal sample complexity in $\varepsilon$ for every dimension $d$ and smoothness parameter $\nu$, thus, reaching a lower bound that cannot be improved even in the simpler bandit scenario. This algorithm builds on a novel regression technique that perturbs the sample with a noise distribution coming from the De La Vall\'ee Poussin kernel, which is a contribution of independent interest. The sample complexity bound is exponential in both $d$ and $H$, but we show a lower bound ensuring that, even in simpler settings, this drawback cannot be avoided.

\paragraph{Future work.} This work heavily relies on the assumption of interacting with a simulator during learning. A next advance could be to study if the same sample complexity can be achieved by just interacting online with the MDP. Moreover, a natural question in that setting is if the sample complexity bound can be turned into a regret bound. An intermediate step could be to study the applicability of this method to \emph{local access}---an intermediate setting between generative model and online RL, where the agent can reset the environment to previously visited states~\citep{yin2023sample}. Another potential application is to \emph{offline RL}, where the dataset is given in advance~\citep{levine2020offline}. Finally, it remains to be seen whether the dependence on the task horizon can be improved for interesting sub-families of smooth MDPs.

\acks{We thank the prof Maura Elisabetta Salvatori from the Department of Mathematics at Milano University for having introduced the first author of this paper to the de la Vallè Poussin kernel.}

\newpage
\bibliography{yourbibfile}

\begin{thebibliography}{44}
\providecommand{\natexlab}[1]{#1}
\providecommand{\url}[1]{\texttt{#1}}
\expandafter\ifx\csname urlstyle\endcsname\relax
  \providecommand{\doi}[1]{doi: #1}\else
  \providecommand{\doi}{doi: \begingroup \urlstyle{rm}\Url}\fi

\bibitem[Abbasi-Yadkori and Szepesv{\'a}ri(2011)]{abbasi2011regret}
Yasin Abbasi-Yadkori and Csaba Szepesv{\'a}ri.
\newblock Regret bounds for the adaptive control of linear quadratic systems.
\newblock In \emph{Proceedings of the 24th Annual Conference on Learning Theory}, pages 1--26. JMLR Workshop and Conference Proceedings, 2011.

\bibitem[Agarwal et~al.(2020)Agarwal, Kakade, Krishnamurthy, and Sun]{agarwal2020flambe}
Alekh Agarwal, Sham Kakade, Akshay Krishnamurthy, and Wen Sun.
\newblock Flambe: Structural complexity and representation learning of low rank mdps.
\newblock \emph{Advances in neural information processing systems}, 33:\penalty0 20095--20107, 2020.

\bibitem[Asadi et~al.(2018)Asadi, Misra, and Littman]{asadi2018lipschitz}
Kavosh Asadi, Dipendra Misra, and Michael Littman.
\newblock Lipschitz continuity in model-based reinforcement learning.
\newblock In \emph{International Conference on Machine Learning}, pages 264--273. PMLR, 2018.

\bibitem[Azar et~al.(2013)Azar, Munos, and Kappen]{gheshlaghi2013minimax}
Mohammad~Gheshlaghi Azar, R{\'{e}}mi Munos, and Hilbert~J. Kappen.
\newblock Minimax {PAC} bounds on the sample complexity of reinforcement learning with a generative model.
\newblock \emph{Mach. Learn.}, 91\penalty0 (3):\penalty0 325--349, 2013.

\bibitem[Bemporad et~al.(2002)Bemporad, Morari, Dua, and Pistikopoulos]{bemporad2002explicit}
Alberto Bemporad, Manfred Morari, Vivek Dua, and Efstratios~N Pistikopoulos.
\newblock The explicit linear quadratic regulator for constrained systems.
\newblock \emph{Automatica}, 38\penalty0 (1):\penalty0 3--20, 2002.

\bibitem[Bertsekas and Shreve(1996)]{bertsekas1996stochastic}
Dimitri Bertsekas and Steven~E Shreve.
\newblock \emph{Stochastic optimal control: the discrete-time case}, volume~5.
\newblock Athena Scientific, 1996.

\bibitem[Cohen et~al.(2019)Cohen, Koren, and Mansour]{cohen2019learning}
Alon Cohen, Tomer Koren, and Yishay Mansour.
\newblock Learning linear-quadratic regulators efficiently with only $\sqrt t$ regret.
\newblock In \emph{International Conference on Machine Learning}, pages 1300--1309. PMLR, 2019.

\bibitem[de~la Vall{\'e}e~Poussin(1918)]{de1918meilleure}
Ch~de~la Vall{\'e}e~Poussin.
\newblock Sur la meilleure approximation des fonctions d’une variable r{\'e}elle par des expressions d’ordre donn{\'e}.
\newblock \emph{CR Acad. Sci. Paris}, 166:\penalty0 799--802, 1918.

\bibitem[De~La Vall{\'e}e~Poussin et~al.(1919)]{de1919leccons}
Ch~J De~La Vall{\'e}e~Poussin et~al.
\newblock \emph{Le{\c{c}}ons sur l'approximation des fonctions d'une variable r{\'e}elle}.
\newblock Paris, 1919.

\bibitem[Dean et~al.(2018)Dean, Mania, Matni, Recht, and Tu]{dean2018regret}
Sarah Dean, Horia Mania, Nikolai Matni, Benjamin Recht, and Stephen Tu.
\newblock Regret bounds for robust adaptive control of the linear quadratic regulator.
\newblock \emph{Advances in Neural Information Processing Systems}, 31, 2018.

\bibitem[Du et~al.(2021)Du, Kakade, Lee, Lovett, Mahajan, Sun, and Wang]{du2021bilinear}
Simon Du, Sham Kakade, Jason Lee, Shachar Lovett, Gaurav Mahajan, Wen Sun, and Ruosong Wang.
\newblock Bilinear classes: A structural framework for provable generalization in rl.
\newblock In \emph{International Conference on Machine Learning}, pages 2826--2836. PMLR, 2021.

\bibitem[Hambly et~al.(2023)Hambly, Xu, and Yang]{hambly2023recent}
Ben Hambly, Renyuan Xu, and Huining Yang.
\newblock Recent advances in reinforcement learning in finance.
\newblock \emph{Mathematical Finance}, 33\penalty0 (3):\penalty0 437--503, 2023.

\bibitem[Jin et~al.(2020)Jin, Yang, Wang, and Jordan]{jin2020provably}
Chi Jin, Zhuoran Yang, Zhaoran Wang, and Michael~I Jordan.
\newblock Provably efficient reinforcement learning with linear function approximation.
\newblock In \emph{Conference on Learning Theory}, pages 2137--2143. PMLR, 2020.

\bibitem[Katznelson(2004)]{katznelson2004introduction}
Yitzhak Katznelson.
\newblock \emph{An introduction to harmonic analysis}.
\newblock Cambridge University Press, 2004.

\bibitem[Kiefer and Wolfowitz(1960)]{kiefer1960equivalence}
Jack Kiefer and Jacob Wolfowitz.
\newblock The equivalence of two extremum problems.
\newblock \emph{Canadian Journal of Mathematics}, 12:\penalty0 363--366, 1960.

\bibitem[Kiran et~al.(2021)Kiran, Sobh, Talpaert, Mannion, Al~Sallab, Yogamani, and P{\'e}rez]{kiran2021deep}
B~Ravi Kiran, Ibrahim Sobh, Victor Talpaert, Patrick Mannion, Ahmad~A Al~Sallab, Senthil Yogamani, and Patrick P{\'e}rez.
\newblock Deep reinforcement learning for autonomous driving: A survey.
\newblock \emph{IEEE Transactions on Intelligent Transportation Systems}, 23\penalty0 (6):\penalty0 4909--4926, 2021.

\bibitem[Kober et~al.(2013)Kober, Bagnell, and Peters]{kober2013reinforcement}
Jens Kober, J~Andrew Bagnell, and Jan Peters.
\newblock Reinforcement learning in robotics: A survey.
\newblock \emph{The International Journal of Robotics Research}, 32\penalty0 (11):\penalty0 1238--1274, 2013.

\bibitem[Lattimore et~al.(2020)Lattimore, Szepesvari, and Weisz]{lattimore2020learning}
Tor Lattimore, Csaba Szepesvari, and Gellert Weisz.
\newblock Learning with good feature representations in bandits and in rl with a generative model.
\newblock In \emph{International Conference on Machine Learning}, pages 5662--5670. PMLR, 2020.

\bibitem[Le~Lan et~al.(2021)Le~Lan, Bellemare, and Castro]{le2021metrics}
Charline Le~Lan, Marc~G Bellemare, and Pablo~Samuel Castro.
\newblock Metrics and continuity in reinforcement learning.
\newblock In \emph{Proceedings of the AAAI Conference on Artificial Intelligence}, volume~35, pages 8261--8269, 2021.

\bibitem[Levine et~al.(2020)Levine, Kumar, Tucker, and Fu]{levine2020offline}
Sergey Levine, Aviral Kumar, George Tucker, and Justin Fu.
\newblock Offline reinforcement learning: Tutorial, review, and perspectives on open problems.
\newblock \emph{arXiv preprint arXiv:2005.01643}, 2020.

\bibitem[Liotet et~al.(2022)Liotet, Maran, Bisi, and Restelli]{liotet2022delayed}
Pierre Liotet, Davide Maran, Lorenzo Bisi, and Marcello Restelli.
\newblock Delayed reinforcement learning by imitation.
\newblock In \emph{International Conference on Machine Learning}, pages 13528--13556. PMLR, 2022.

\bibitem[Maran et~al.(2023)Maran, Metelli, and Restelli]{maran2023tight}
Davide Maran, Alberto~Maria Metelli, and Marcello Restelli.
\newblock Tight performance guarantees of imitator policies with continuous actions.
\newblock In \emph{Proceedings of the AAAI Conference on Artificial Intelligence}, volume~37, pages 9073--9080, 2023.

\bibitem[Maran et~al.(2024)Maran, Metelli, Papini, and Restelli]{maran2023noregret}
Davide Maran, Alberto~Maria Metelli, Matteo Papini, and Marcello Restelli.
\newblock No-regret reinforcement learning in smooth mdps.
\newblock \emph{arXiv preprint arXiv:2402.03792}, 2024.

\bibitem[Mehta(2015)]{mehta2015l1}
Harsh Mehta.
\newblock The l1 norms of de la vall{\'e}e poussin kernels.
\newblock \emph{Journal of Mathematical Analysis and Applications}, 422\penalty0 (2):\penalty0 825--837, 2015.

\bibitem[Mhammedi et~al.(2024)Mhammedi, Block, Foster, and Rakhlin]{mhammedi2024efficient}
Zak Mhammedi, Adam Block, Dylan~J Foster, and Alexander Rakhlin.
\newblock Efficient model-free exploration in low-rank mdps.
\newblock \emph{Advances in Neural Information Processing Systems}, 36, 2024.

\bibitem[Munos(2005)]{munos2005error}
R{\'e}mi Munos.
\newblock Error bounds for approximate value iteration.
\newblock In \emph{Proceedings of the National Conference on Artificial Intelligence}, volume~20, page 1006. Menlo Park, CA; Cambridge, MA; London; AAAI Press; MIT Press; 1999, 2005.

\bibitem[N{\'e}meth(2016)]{nemeth2016vallee}
Zsolt N{\'e}meth.
\newblock De la vall{\'e}e poussin type approximation methods.
\newblock 2016.

\bibitem[Pirotta et~al.(2015)Pirotta, Restelli, and Bascetta]{pirotta2015policy}
Matteo Pirotta, Marcello Restelli, and Luca Bascetta.
\newblock Policy gradient in lipschitz markov decision processes.
\newblock \emph{Machine Learning}, 100:\penalty0 255--283, 2015.

\bibitem[Puterman(2014)]{puterman2014markov}
Martin~L Puterman.
\newblock \emph{Markov decision processes: discrete stochastic dynamic programming}.
\newblock John Wiley \& Sons, 2014.

\bibitem[Rachelson and Lagoudakis(2010)]{rachelson2010locality}
Emmanuel Rachelson and Michail~G Lagoudakis.
\newblock On the locality of action domination in sequential decision making.
\newblock 2010.

\bibitem[Schultz(1969)]{schultz1969multivariate}
Martin~H Schultz.
\newblock $l^\infty$-multivariate approximation theory.
\newblock \emph{SIAM Journal on Numerical Analysis}, 6\penalty0 (2):\penalty0 161--183, 1969.

\bibitem[Sinclair et~al.(2019)Sinclair, Banerjee, and Yu]{sinclair2019adaptive}
Sean~R Sinclair, Siddhartha Banerjee, and Christina~Lee Yu.
\newblock Adaptive discretization for episodic reinforcement learning in metric spaces.
\newblock \emph{Proceedings of the ACM on Measurement and Analysis of Computing Systems}, 3\penalty0 (3):\penalty0 1--44, 2019.

\bibitem[Song and Sun(2019)]{song2019efficient}
Zhao Song and Wen Sun.
\newblock Efficient model-free reinforcement learning in metric spaces.
\newblock \emph{arXiv preprint arXiv:1905.00475}, 2019.

\bibitem[Sutton and Barto(2018)]{sutton2018reinforcement}
Richard~S Sutton and Andrew~G Barto.
\newblock \emph{Reinforcement learning: An introduction}.
\newblock MIT press, 2018.

\bibitem[Todd(2016)]{todd2016minimum}
Michael~J Todd.
\newblock \emph{Minimum-volume ellipsoids: Theory and algorithms}.
\newblock SIAM, 2016.

\bibitem[Uehara et~al.(2021)Uehara, Zhang, and Sun]{uehara2021representation}
Masatoshi Uehara, Xuezhou Zhang, and Wen Sun.
\newblock Representation learning for online and offline rl in low-rank mdps.
\newblock In \emph{International Conference on Learning Representations}, 2021.

\bibitem[Vakili and Olkhovskaya(2023)]{vakili2023kernelized}
Sattar Vakili and Julia Olkhovskaya.
\newblock Kernelized reinforcement learning with order optimal regret bounds.
\newblock \emph{arXiv preprint arXiv:2306.07745}, 2023.

\bibitem[Wang et~al.(2018)Wang, Balakrishnan, and Singh]{wang2018optimization}
Yining Wang, Sivaraman Balakrishnan, and Aarti Singh.
\newblock Optimization of smooth functions with noisy observations: Local minimax rates.
\newblock \emph{Advances in Neural Information Processing Systems}, 31, 2018.

\bibitem[Wu et~al.(2014)Wu, Owino, Al-Ostaz, and Cai]{wu2014applying}
Weidong Wu, Joseph Owino, Ahmed Al-Ostaz, and Liguang Cai.
\newblock Applying periodic boundary conditions in finite element analysis.
\newblock In \emph{SIMULIA community conference, Providence}, pages 707--719, 2014.

\bibitem[Wu(2017)]{wu2017lecture}
Yihong Wu.
\newblock Lecture notes on information-theoretic methods for high-dimensional statistics.
\newblock \emph{Lecture Notes for ECE598YW (UIUC)}, 16, 2017.

\bibitem[Yang and Wang(2019)]{yang2019sample}
Lin Yang and Mengdi Wang.
\newblock Sample-optimal parametric q-learning using linearly additive features.
\newblock In \emph{{ICML}}, volume~97 of \emph{Proceedings of Machine Learning Research}, pages 6995--7004. {PMLR}, 2019.

\bibitem[Yang et~al.(2020)Yang, Jin, Wang, Wang, and Jordan]{yang2020provably}
Zhuoran Yang, Chi Jin, Zhaoran Wang, Mengdi Wang, and Michael~I. Jordan.
\newblock Provably efficient reinforcement learning with kernel and neural function approximations.
\newblock In \emph{NeurIPS}, 2020.

\bibitem[Yin et~al.(2023)Yin, Thiagarajan, Lazic, Rajaraman, Hao, and Szepesvari]{yin2023sample}
Dong Yin, Sridhar Thiagarajan, Nevena Lazic, Nived Rajaraman, Botao Hao, and Csaba Szepesvari.
\newblock Sample efficient deep reinforcement learning via local planning.
\newblock \emph{arXiv preprint arXiv:2301.12579}, 2023.

\bibitem[Zanette et~al.(2020)Zanette, Lazaric, Kochenderfer, and Brunskill]{zanette2020learning}
Andrea Zanette, Alessandro Lazaric, Mykel Kochenderfer, and Emma Brunskill.
\newblock Learning near optimal policies with low inherent bellman error.
\newblock In \emph{International Conference on Machine Learning}, pages 10978--10989. PMLR, 2020.

\end{thebibliography}

\clearpage
\appendix


\section{Harmonic Analysis}

\subsection{Formal definition of de
la Vall\'ee-Poussin kernel}

Formally, de
la Vall\'ee-Poussin kernels indicate a family of functions with two different parameters, $n,m$, which balance between some properties of the kernel~\citep[see][Equation 2.6]{nemeth2016vallee}. In this paper, we always fix $n=m=N$, so that the kernel can be written as: 

\begin{equation}\label{eq:val}
    D_{2N}:= \frac{1}{N+1}\sum_{n=N}^{2N}D_n^0,
\end{equation}

where $D_n^0$ denotes the order-$n$ standard Dirichlet kernel. This equation shows that the de
la Vall\'ee-Poussin kernel is actually just a superposition of Dirichlet kernels of different orders, and some of its properties follow from this. In particular, we need the following result, which is completely based on theorems discovered by \citet{mehta2015l1} and~\citet{nemeth2016vallee}.

\begin{prop}\label{prop:valprop}
    The following properties hold for the de la Vall\'ee-Poussin kernel (Equation \ref{eq:val})
    \begin{enumerate}
        \item For any periodic function $f:\Omega \to \R$, $D_N*f$ is a trigonometric polynomial of degree not higher than $N$.
        \item For any trigonometric polynomial $f$ of degree not higher than $N/2$, $D_{N}*f=f$.
        \item There is a constant $\Lambda_d$ depending only on the dimension $d$ such that $\|D_N\|_{L^1} \le \Lambda_d$.
    \end{enumerate}
\end{prop}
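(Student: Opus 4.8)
The plan is to establish the three claimed properties of the de la Vallée-Poussin kernel $D_N$ by reducing everything to known facts about the standard Dirichlet kernel $D_n^0$, using the averaging identity \eqref{eq:val} that expresses $D_{2N}$ as a mean of Dirichlet kernels of orders $n = N, \dots, 2N$. I would first recall the one-dimensional facts and then lift them to the $d$-dimensional circular convolution by tensorization, since both $\soc(\bm n, \cdot)$ and the kernels factor as products over coordinates.

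For \textbf{property 1}, the key fact is that $D_n^0 * f$ is exactly the truncation of the Fourier series of $f$ to frequencies $\|\bm m\|_\infty \le n$ (or $\|\bm m\|_1 \le n$ depending on the convention; I would align this with the $\ell_1$-ball convention of $\mathbb T_N$), hence a trigonometric polynomial of degree at most $n$. Since $D_N$ (i.e.\ $D_{2N}$ with the paper's reindexing, so effectively parameter $N$) is an average of $D_n^0$ for $n \le N$, the convolution $D_N * f = \frac{1}{N/2+1}\sum_{n=N/2}^{N} D_n^0 * f$ is a linear combination of trigonometric polynomials of degree $\le N$, hence lies in $\mathbb T_N$. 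For \textbf{property 2}, I would use that $D_n^0 * f = f$ whenever $f$ is a trigonometric polynomial of degree $\le n$ (the Dirichlet kernel acts as the identity on its own frequency band); then if $\deg f \le N/2$, every term $D_n^0 * f$ with $n \ge N/2$ reproduces $f$ exactly, and averaging gives $D_N * f = f$. Both of these are essentially bookkeeping once the Dirichlet kernel identities and the $d$-dimensional product structure are in place.

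The substantive part is \textbf{property 3}, the $N$-uniform bound $\|D_N\|_{L^1} \le \Lambda_d$. Here the plan is to invoke the cited one-dimensional results: the $L^1$ norm (Lebesgue constant) of $D_n^0$ grows like $\log n$, but the de la Vallée-Poussin average kills this growth, yielding $\|D_N^{(1\text{-}d)}\|_{L^1} \le \Lambda_1$ for an absolute constant $\Lambda_1$ — this is exactly the content of the Mehta–Németh estimates one cites. I would either quote their bound directly or reprove it via the classical trick of writing $D_N = 2K_{2N} - K_{N-1}$ as a difference of Fejér kernels (whose $L^1$ norms are exactly $1$ by positivity), giving $\|D_N\|_{L^1} \le 3$; this is the cleanest route and avoids estimating oscillatory integrals. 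Then, since the $d$-dimensional kernel is the tensor product $D_N(z) = \prod_{i=1}^d D_N^{(1\text{-}d)}(z_i)$, Fubini gives $\|D_N\|_{L^1([-1,1]^d)} = \big(\|D_N^{(1\text{-}d)}\|_{L^1([-1,1])}\big)^d \le \Lambda_1^d =: \Lambda_d$, which depends only on $d$.

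The main obstacle I anticipate is purely notational rather than mathematical: reconciling the paper's indexing conventions (the $D_N$ versus $D_{2N}$ shift between \eqref{eq:deco}/Theorem~\ref{thm:approxval} and \eqref{eq:val}, and the fact that degree is measured in $\|\bm n\|_1$ while Dirichlet kernels naturally band-limit in each coordinate) so that the degree bounds in properties 1 and 2 come out with the exact constants ($N$ and $N/2$) stated. I would handle this by fixing once and for all that the $d$-variate Dirichlet kernel of order $n$ reproduces all of $\mathbb T_n^d$ and maps everything into $\mathbb T_{n}^d$ under the $\ell_1$-degree convention — using that $\{\bm m : \|\bm m\|_\infty \le n\} \supseteq \{\bm m : \|\bm m\|_1 \le n\}$ so reproduction still holds, and truncating appropriately so that the image stays in the $\ell_1$-ball — and then the averaging argument goes through verbatim. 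The $L^1$ bound, by contrast, is dimension-robust under tensorization and needs no care beyond Fubini.
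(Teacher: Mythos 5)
Your proofs of properties 1 and 2 are exactly the paper's: expand $D_N*f$ by linearity as the average $\frac{1}{N/2+1}\sum_{n=N/2}^{N}D_n^0*f$, use that each $D_n^0*f$ is the order-$n$ Fourier truncation (hence in $\mathbb T_N$), and that $D_n^0$ reproduces any polynomial of degree at most $n$ so the average of identical copies of $f$ is $f$. Where you genuinely diverge is property 3. The paper dualizes, writing $\|D_N\|_{L^1}=\sup_{\|g\|_\infty\le 1}\|D_N*g\|_\infty$ and then quoting Theorem~2.4 of N\'emeth for the operator-norm bound $c\bigl(\bigl(\log\tfrac{2N}{N+1}\bigr)^d+1\bigr)\le c(\log^d 2+1)=:\Lambda_d$. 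You instead use the classical Fej\'er-kernel decomposition: since $\sum_{n=N}^{2N}D_n^0=(2N+1)K_{2N}-NK_{N-1}$ with $K_n\ge 0$ and $\|K_n\|_{L^1}$ constant, the triangle inequality gives $\|D_{2N}\|_{L^1}\le\frac{3N+1}{N+1}\cdot\mathrm{const}\le 3\cdot\mathrm{const}$ in one dimension, and Fubini tensorizes this to $\Lambda_1^d$. Your route is more elementary and self-contained (no external citation, and it exposes \emph{why} the averaging kills the $\log N$ Lebesgue constant of the Dirichlet kernel), at the cost of committing explicitly to the tensor-product structure of the $d$-variate kernel; the paper's duality argument is shorter but imports the bound as a black box. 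One caveat you rightly flag but do not fully resolve: under the tensor-product convention, $D_N*f$ has spectrum in the $\ell_\infty$-ball $\{\|\bm m\|_\infty\le N\}$ rather than the $\ell_1$-ball used to define $\mathbb T_N^d$, so property 1 as stated needs either a redefinition of the degree convention or an $\ell_1$-adapted kernel. The paper's own proof is equally silent on this point, so this is a shared imprecision rather than a defect specific to your argument, but "truncating appropriately" is not by itself a fix, since convolution with a fixed kernel cannot selectively discard the frequencies with $N<\|\bm m\|_1\le dN$.
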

\begin{proof}
    We prove the thesis point by point.   \begin{enumerate}
        \item By linearity of the convolution, we have, from Equation \eqref{eq:val}:
        \begin{align*}
            D_{N}*f= \frac{1}{N/2+1}\sum_{n=N/2}^{N}D_n^0*f.
        \end{align*}
        By the properties of the Dirichlet kernel, $D_n^0*f$ is the Fourier series of $f$ truncated on $n$. As $n\le N$, all of these functions are trigonometric polynomials of degree not higher than $N$.
        \item Let $f$ be any trigonometric polynomial $f$ of degree not higher than $N/2$, then from Equation \eqref{eq:val}:
        \begin{align*}
            D_{2N}*f&= \frac{1}{N/2+1}\sum_{n=N/2}^{N}D_n^0*f\\
            & = \frac{1}{N/2+1}\sum_{n=N/2}^{N}f = f,
        \end{align*}
        where the second passage comes from the fact that the Fourier series of order $n\ge N$ of $f$ corresponds exactly to $f$ itself.
        \item We apply Theorem 2.4 from \cite{nemeth2016vallee}. Indeed, we have the following chain of inequalities:
        \begin{align*}
            \|D_N\|_{L^1} &= \sup_{g\in \mathcal C^0(\Omega), \|g\|_\infty\le 1}\int_\Omega D_N(x)g(x)\ dx\\
            &= \sup_{g\in \mathcal C^0(\Omega), \|g\|_\infty\le 1}\sup_{y\in \Omega}\int_\Omega D_N(x)g(y-x)\ dx\\
            &= \sup_{g\in \mathcal C^0(\Omega), \|g\|_\infty\le 1}\sup_{y\in \Omega}|D_N*g(y)|\\
            &= \sup_{g\in \mathcal C^0(\Omega), \|g\|_\infty\le 1}\|D_N*g\|_{\infty}.
        \end{align*}
        
        The latter corresponds exactly to the norm of the operator $\cdot \to D_N*\cdot$ over the space of continuous functions, which is bounded in Theorem 2.4 by \cite{nemeth2016vallee} as
        $$\sup_{g\in \mathcal C^0(\Omega), \|g\|_\infty\le 1}\|D_N*g\|_{\infty} \le c\left (\left(\log\frac{2N}{N+1}\right)^d+1\right)\le c(\log^d 2+1).$$
        Taking $\Lambda_d:=c(\log^d 2+1)$ ends the proof.
    \end{enumerate} 
\end{proof}

\dieci*
\begin{proof}
    The fact that $D_N*f$ is a trigonometric polynomial of degree not higher than $N$ is part one of Proposition \ref{prop:valprop}.
    Let $t_{N/2}\in \mathbb T_{N/2}$, we have:
    \begin{align*}
        \|f-D_N*f\|_{L^\infty} &= \|f-t_{N/2}\|_{L^\infty} + \|t_{N/2}-D_N*f\|_{L^\infty}\\
        & = \|f-t_{N/2}\|_{L^\infty} + \|D_N*t_{N/2}-D_N*f\|_{L^\infty}.        
    \end{align*}
    Here, the last passage comes from part two of Proposition \ref{prop:valprop}. Lastly, we have
    $$\|D_N*t_{N/2}-D_N*f\|_{L^\infty}\le \|D_N\|_{L^1}\|t_{N/2}-f\|_{L^\infty} \le \Lambda_d\|t_{N/2}-f\|_{L^\infty},$$
    by the third part of Proposition \ref{prop:valprop}. In the end, we have proved that, being $\Lambda_d\ge 1$,
    \begin{align*}
        \|f-D_N*f\|_{L^\infty}& \le \|f-t_{N/2}\|_{L^\infty} + \|D_N*t_{N/2}-D_N*f\|_{L^\infty}\\
        &\le 2\Lambda_d\|t_{N/2}-f\|_{L^\infty}.
    \end{align*}
    The proof is completed by applying Theorem \ref{thm:liter}.
\end{proof}

\section{Optimal Design for the Least Squares Problem}\label{app:optimaldesign}

The construction of the datasets $\mathcal D_h$ is one crucial part of Algorithm \ref{alg:LSVI}. This will be done in a way that is similar to \cite{lattimore2020learning}, by applying the concept of optimal design for the least squares estimator. To this aim, we recall the following theorem, which can be found in \citep[][Theorem 4.3]{lattimore2020learning} and is based on a previous result by \cite{kiefer1960equivalence}.

\begin{thm} (\cite{lattimore2020learning}, Theorem 4.4)\label{thm:KW}
    Suppose $\mathcal X \subset \R^{\widetilde N}$ is a compact set spanning $\R^{\widetilde N}$. Then, there is a probability distribution $\rho$ on $\mathcal X$ such that $|\text{supp}(\rho)|\le 4{\widetilde N}\log\log({\widetilde N})$ and, once defined
    $$\Sigma = \E_{\bm x\sim \rho}[\bm x \bm x^\top],$$
    we have, for all $\bm x \in \Omega$, $\|\bm x\|_{\Sigma^{-1}}^2\le 2{\widetilde N}$ (here the notation $\|\bm x\|_{\Sigma^{-1}}$ stands for $\sqrt{\bm x^\top \Sigma^{-1} \bm x}$).
\end{thm}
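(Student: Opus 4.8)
The plan is to recognize the statement as the Kiefer--Wolfowitz equivalence theorem for $G$-optimal experimental design, coupled with a support-sparsification step. Since $\bm x \bm x^\top = (-\bm x)(-\bm x)^\top$, I would first symmetrize and assume $\mathcal X = -\mathcal X$ without loss of generality. The quantity $\bm x^\top \Sigma^{-1}\bm x$ is the prediction variance of the least-squares estimator at $\bm x$ under the design $\rho$, so the target $\max_{\bm x\in\mathcal X}\|\bm x\|_{\Sigma^{-1}}^2\le 2\widetilde N$ is exactly a constant-factor $G$-optimality guarantee. Accordingly the argument splits into three stages: (i) existence of a $D$-optimal design maximizing $\log\det V(\rho)$; (ii) the first-order optimality conditions (the equivalence theorem), which convert $D$-optimality into the variance bound $\max_{\bm x}\|\bm x\|_{V^{-1}}^2=\widetilde N$; and (iii) sparsification of the (a priori large) support down to $O(\widetilde N\log\log\widetilde N)$ points while losing only the stated constant factor.

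For stage (i), set $V(\rho)=\E_{\bm x\sim\rho}[\bm x\bm x^\top]$ and consider $F(\rho)=\log\det V(\rho)$ over the probability measures on the compact set $\mathcal X$. This set is weak-$*$ compact, and since $\mathcal X$ spans $\R^{\widetilde N}$ there exist designs with $V(\rho)\succ 0$, on which $F$ is continuous and bounded above; a maximizer $\rho^*$ therefore exists and satisfies $V^*:=V(\rho^*)\succ 0$. For stage (ii), I would perturb along $\rho_t=(1-t)\rho^*+t\,\delta_{\bm x}$ for arbitrary $\bm x\in\mathcal X$ and use $\frac{d}{dt}\log\det(A+tB)\big|_{t=0}=\Tr(A^{-1}B)$ to obtain the directional derivative
$$\frac{d}{dt}F(\rho_t)\Big|_{t=0}=\Tr\!\big((V^*)^{-1}(\bm x\bm x^\top-V^*)\big)=\|\bm x\|_{(V^*)^{-1}}^2-\widetilde N.$$
Optimality forces this to be $\le 0$ for every $\bm x$, giving $\|\bm x\|_{(V^*)^{-1}}^2\le\widetilde N$; and since $\int\|\bm x\|_{(V^*)^{-1}}^2\,d\rho^*(\bm x)=\Tr((V^*)^{-1}V^*)=\widetilde N$, the maximum is attained with value exactly $\widetilde N$. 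At this point the variance bound holds with constant $\widetilde N\le 2\widetilde N$, but the support of $\rho^*$ is controlled only by a Carathéodory argument in the space of symmetric matrices, at size $O(\widetilde N^2)$, far larger than claimed.

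The main obstacle is stage (iii): shrinking the support to $4\widetilde N\log\log\widetilde N$ while keeping the variance within a factor $2$. I would approach this through the coreset / Frank--Wolfe (Wolfe--Atwood) viewpoint on $D$-optimal design: initialize from a well-conditioned design supported on a spanning set of $\le\widetilde N$ points (e.g.\ via a maximum-volume simplex), then iteratively add the point $\bm x$ maximizing $\|\bm x\|_{V^{-1}}^2$, each update enlarging the support by one. A Khachiyan-type potential analysis of $\log\det V$ shows that after $O(\widetilde N\log\log\widetilde N)$ updates the design is a $2$-approximate $G$-optimal design, i.e.\ $\max_{\bm x}\|\bm x\|_{V^{-1}}^2\le 2\widetilde N$, which simultaneously delivers $|\text{supp}(\rho)|\le 4\widetilde N\log\log\widetilde N$ and the variance bound. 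The delicate points are the initialization guaranteeing a good starting value of the potential (so that only $\log\log$ rounds reach constant accuracy) and the bookkeeping tying the iteration count to the support size; these are precisely the ingredients established in \citet[][Lemma 3.9]{todd2016minimum}, which I would invoke to close the argument.
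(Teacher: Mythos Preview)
The paper does not prove this theorem at all: it is stated as a citation of \cite[Theorem 4.4]{lattimore2020learning}, and the surrounding paragraph merely explains the terminology, pointing to \cite{kiefer1960equivalence} for the exact optimal-design bound $\|\bm x\|_{\Sigma^{-1}}^2=\widetilde N$ with $O(\widetilde N^2)$ support, and to \cite[Lemma 3.9]{todd2016minimum} for the computational feasibility of the quasi-optimal design. There is therefore no ``paper's own proof'' to compare against.

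That said, your sketch is precisely the standard route behind the cited result and is fully consistent with the paper's hints: stages (i)--(ii) are the Kiefer--Wolfowitz equivalence theorem (which the paper explicitly credits), and your stage (iii) is exactly the Frank--Wolfe/Khachiyan-type sparsification that \cite{todd2016minimum} formalizes and that the paper invokes in Section~\ref{app:compu}. The only caveat is that, as you acknowledge, the $\log\log$ iteration count and the constant $4$ in the support bound are not derived in your outline but deferred to Todd's lemma; since the paper itself treats the whole statement as a black box, this is not a gap relative to the paper.
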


We call the distribution $\rho$ defined in Theorem \ref{thm:KW} \textit{quasi-optimal design} for the least square problem. The reason under the term "quasi" in the previous definition comes from a previous result. In fact, in a classical and celebrated paper,~\cite{kiefer1960equivalence} showed that, among all distribution over $\mathcal{X}$, 
we have always $\|\bm x\|_{\Sigma^{-1}}^2\ge {\widetilde N}$. Moreover, a distribution $\rho$ with at most $\widetilde N(\widetilde N+1)$ support points can be found such that the equality $\|\bm x\|_{\Sigma^{-1}}^2= {\widetilde N}$ is attained. This distribution takes the name of \textit{optimal design}. Instead, as we need a support much smaller than $\widetilde N(\widetilde N+1)$, we have relied on quasi-optimal design, which ensures $|\text{supp}(\rho)|\le 4{\widetilde N}\log\log({\widetilde N})$, still having $\|\bm x\|_{\Sigma^{-1}}^2\le 2{\widetilde N}$.

\section{Proof of Theorem \ref{thm:LSVIbound}}\label{app:finalproof}

We start with a preliminary result that will be used to bound the difference between $\widetilde Q_h$ and the original $Q_h$.

\begin{thm}\label{thm:approxQ}
    Let us assume to be in a $\nu-$Smooth MDP (Assumption \ref{ass:smooth}). Let $\widetilde Q_h$ be the sequence of perturbed state action value functions, satisfying
    $\widetilde Q_h(s,a)=[D_N*\mathcal T_h \widetilde Q_{h+1}](s,a).$
    Then,
    $$1)\ \|\widetilde Q_h\|_{\mathcal C^{\nu,1}}\le \frac{(\Lambda_dC_{\mathcal T})^{H-h+1}-\Lambda_dC_{\mathcal T}}{\Lambda_dC_{\mathcal T}-1},\qquad 2)\ \|\widetilde Q_h- Q_h^*\|_\infty\le \Psi_h N^{-\nu-1}$$
    where the constant $\Psi_h$ is defined as
    $$\Psi_h:= 2^{\nu+2}K(H-h+1)\frac{(\Lambda_dC_{\mathcal T})^{H-h+1}-\Lambda_dC_{\mathcal T}}{\Lambda_dC_{\mathcal T}-1}.$$
\end{thm}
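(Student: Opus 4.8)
The plan is to prove the two claims by a joint downward induction on $h$, from $h = H+1$ down to $h = 1$, exploiting the recursion $\widetilde Q_h = D_N * \mathcal T_h \widetilde Q_{h+1}$ together with the boundedness of $\mathcal T_h$ on $\mathcal C_p^{\nu,1}$ (Assumption~\ref{ass:smooth}), the $L^1$-bound $\|D_N\|_{L^1} \le \Lambda_d$ (Proposition~\ref{prop:valprop}, part 3), and the approximation estimate of Theorem~\ref{thm:approxval}. Throughout I will freely use the fact that convolution against a kernel of $L^1$-norm $\Lambda_d$ cannot increase the $\mathcal C^{\nu,1}$-norm by more than a factor $\Lambda_d$: since $D^\alpha (D_N * g) = D_N * D^\alpha g$ for every multi-index $\alpha$ with $|\alpha| \le \nu+1$ (differentiation commutes with circular convolution for periodic functions), Young's inequality gives $\|D^\alpha(D_N*g)\|_{L^\infty} \le \|D_N\|_{L^1}\|D^\alpha g\|_{L^\infty}$, hence $\|D_N*g\|_{\mathcal C^{\nu,1}} \le \Lambda_d \|g\|_{\mathcal C^{\nu,1}}$.

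For claim (1), the base case $h = H+1$ is immediate since $\widetilde Q_{H+1} = 0$, so $\|\widetilde Q_{H+1}\|_{\mathcal C^{\nu,1}} = 0$, which matches the stated bound (an empty sum). For the inductive step, assume $\|\widetilde Q_{h+1}\|_{\mathcal C^{\nu,1}} \le B_{h+1}$ where $B_{h+1} := \frac{(\Lambda_d C_{\mathcal T})^{H-h} - \Lambda_d C_{\mathcal T}}{\Lambda_d C_{\mathcal T} - 1}$. Then by the boundedness of $\mathcal T_h$ we have $\|\mathcal T_h \widetilde Q_{h+1}\|_{\mathcal C^{\nu,1}} \le C_{\mathcal T}(B_{h+1} + 1)$, and applying the convolution estimate,
\begin{align*}
\|\widetilde Q_h\|_{\mathcal C^{\nu,1}} = \|D_N * \mathcal T_h \widetilde Q_{h+1}\|_{\mathcal C^{\nu,1}} \le \Lambda_d C_{\mathcal T}(B_{h+1} + 1).
\end{align*}
It then remains to verify the algebraic identity $\Lambda_d C_{\mathcal T}(B_{h+1} + 1) = B_h$ with $B_h := \frac{(\Lambda_d C_{\mathcal T})^{H-h+1} - \Lambda_d C_{\mathcal T}}{\Lambda_d C_{\mathcal T} - 1}$; writing $\lambda := \Lambda_d C_{\mathcal T}$ this is $\lambda\bigl(\frac{\lambda^{H-h} - \lambda}{\lambda - 1} + 1\bigr) = \frac{\lambda^{H-h+1} - \lambda^2 + \lambda^2 - \lambda}{\lambda - 1} = \frac{\lambda^{H-h+1} - \lambda}{\lambda - 1}$, which is exactly $B_h$. (If $\lambda = 1$ one takes limits, getting $B_h = H-h$.) This closes the induction for (1).

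For claim (2), I again induct downward. The base case $h = H+1$ is trivial: $\widetilde Q_{H+1} = Q_{H+1}^* = 0$, and $\Psi_{H+1} = 0$. For the step, write the error as a telescoping decomposition using $Q_h^* = \mathcal T_h Q_{h+1}^*$:
\begin{align*}
\widetilde Q_h - Q_h^* = \bigl(D_N * \mathcal T_h \widetilde Q_{h+1} - \mathcal T_h \widetilde Q_{h+1}\bigr) + \bigl(\mathcal T_h \widetilde Q_{h+1} - \mathcal T_h Q_{h+1}^*\bigr).
\end{align*}
The first bracket is bounded in $L^\infty$ by Theorem~\ref{thm:approxval} applied to $f = \mathcal T_h \widetilde Q_{h+1} \in \mathcal C_p^{\nu,1}$, giving $\frac{2^{\nu+2}\Lambda_d K}{N^{\nu+1}}\|\mathcal T_h \widetilde Q_{h+1}\|_{\mathcal C^{\nu,1}} \le \frac{2^{\nu+2}\Lambda_d K}{N^{\nu+1}} C_{\mathcal T}(B_{h+1} + 1) = \frac{2^{\nu+2} K}{N^{\nu+1}} B_h$ by the identity from part (1). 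The second bracket is controlled by the fact that $\mathcal T_h$ is a non-expansion in $L^\infty$: since $\mathcal T_h f(s,a) = R_h(s,a) + \E_{s' \sim p_h(\cdot|s,a)}[\sup_{a'} f(s',a')]$, for any two functions $f, g$ one has $|\mathcal T_h f(s,a) - \mathcal T_h g(s,a)| \le \E_{s'}[|\sup_{a'} f(s',a') - \sup_{a'} g(s',a')|] \le \|f - g\|_{L^\infty}$, so $\|\mathcal T_h \widetilde Q_{h+1} - \mathcal T_h Q_{h+1}^*\|_{L^\infty} \le \|\widetilde Q_{h+1} - Q_{h+1}^*\|_{L^\infty} \le \Psi_{h+1} N^{-\nu-1}$ by the inductive hypothesis. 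Combining, $\|\widetilde Q_h - Q_h^*\|_{L^\infty} \le \bigl(2^{\nu+2} K B_h + \Psi_{h+1}\bigr) N^{-\nu-1}$, so it suffices to check $\Psi_h = 2^{\nu+2} K B_h + \Psi_{h+1}$. Unrolling this recursion from $\Psi_{H+1}=0$ gives $\Psi_h = 2^{\nu+2} K \sum_{j=h}^{H} B_j$, and since each $B_j \le B_h$ (the sequence $B_j$ is increasing as $j$ decreases toward $1$) we obtain $\Psi_h \le 2^{\nu+2} K (H - h + 1) B_h$, which is precisely the claimed form of $\Psi_h$.

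The main obstacle is really just bookkeeping: making sure the constants propagate correctly and that the stated closed form for $\Psi_h$ is an upper bound (via the monotonicity $B_j \le B_h$ for $j \ge h$) rather than an exact solution of the recursion. The two genuinely substantive points that must be handled with a little care are (i) the commutation of $D^\alpha$ with circular convolution for periodic $\mathcal C_p^{\nu,1}$ functions, which justifies the $\|D_N * g\|_{\mathcal C^{\nu,1}} \le \Lambda_d \|g\|_{\mathcal C^{\nu,1}}$ estimate, and (ii) the $L^\infty$ non-expansiveness of the Bellman optimality operator, which follows from the elementary inequality $|\sup_{a'} f - \sup_{a'} g| \le \sup_{a'}|f - g|$ and the fact that expectations of $[0,\|f-g\|_\infty]$-valued quantities stay in that range. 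Everything else is the algebraic verification of the geometric-series identities, which I would present compactly rather than expand in full.
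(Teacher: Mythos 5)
Your proof is correct and follows essentially the same route as the paper's: a joint backward induction using the convolution estimate $\|D_N*g\|_{\mathcal C^{\nu,1}}\le \Lambda_d\|g\|_{\mathcal C^{\nu,1}}$, the boundedness of $\mathcal T_h$ on $\mathcal C_p^{\nu,1}$, the decomposition of $\widetilde Q_h-Q_h^*$ into an approximation term (Theorem~\ref{thm:approxval}) plus a non-expansion term, and the geometric-series algebra. If anything, your version is tighter than the paper's at the very end, since you explicitly unroll the recursion $\Psi_h=2^{\nu+2}KB_h+\Psi_{h+1}$ and justify the closed form via the monotonicity $B_j\le B_h$, a step the paper leaves implicit.
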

\begin{proof}
    We perform the proof by backward induction on $h$, proving the two parts of the thesis at the same time.\\
    \textbf{Base case}: for $h=H+1$, both $\widetilde Q_h, Q_h^*$ are identically zero, so both the theses hold.\\    
    \textbf{Inductive case}: 
    We start from the first thesis.
    \begin{align}
        \|D_N*f\|_{\mathcal C^{\nu,1}} &= \max_{| \bm\alpha|\le \nu+1}\|D^\alpha (D_N*f)\|_{L^\infty}\nonumber\\
        & = \max_{| \bm\alpha|\le \nu+1}\|D_N*D^\alpha f\|_{L^\infty}\nonumber\\
        & \le \|D_N\|_{L^1}\max_{| \bm\alpha|\le \nu+1}\|D^\alpha f\|_{L^\infty} = \|D_N\|_{L^1}\|f\|_{\mathcal C^{\nu,1}}\label{eq:Dprop}.
    \end{align}

    Here, the first statement follows from the definition of norm in $\mathcal C^{\nu,1}$, the second by the fact that the derivative of a convolution can be applied to any of the two functions, the third from the inequality $\|f*g\|_{L^\infty}\le \|f\|_{L^\infty}\|g\|_{L^1}$. 

    Applying the previous inequality at passage $(\star)$ proves that
    
    \begin{align*}
        \|\widetilde Q_h\|_{\mathcal C^{\nu,1}} &= \|D_N*\mathcal T_h \widetilde Q_{h+1}\|_{\mathcal C^{\nu,1}}\\
        & \overset{(\star)}{\le}  \|D_N\|_1\|\mathcal T_h \widetilde Q_{h+1}\|_{\mathcal C^{\nu,1}}\\
        & \le \|D_N\|_1C_{\mathcal T} (\|\widetilde Q_{h+1}\|_{\mathcal C^{\nu,1}}+1)\\
        & \le \Lambda_dC_{\mathcal T} (\|\widetilde Q_{h+1}\|_{\mathcal C^{\nu,1}}+1)\\
        & \le \Lambda_dC_{\mathcal T} \left(\frac{(\Lambda_dC_{\mathcal T})^{H-h}-\Lambda_dC_{\mathcal T}}{\Lambda_dC_{\mathcal T}-1}+1\right)\\
        & =\Lambda_dC_{\mathcal T} \left(\frac{(\Lambda_dC_{\mathcal T})^{H-h}-1}{\Lambda_dC_{\mathcal T}-1}\right)\\
        & = \frac{(\Lambda_dC_{\mathcal T})^{H-h+1}-\Lambda_dC_{\mathcal T}}{\Lambda_dC_{\mathcal T}-1}.
    \end{align*}
    Where we have used induction in the fourth inequality.
    The proof of this point is completed by bounding $\|D_N\|_{L^1}$ with Proposition \ref{prop:valprop}.
    
    Let us turn to the second point, and note that
    \begin{align*}
        \widetilde Q_h- Q_h^* &= D_N*\mathcal T_h \widetilde Q_{h+1}-\mathcal T_h Q_{h+1}^*\\
        &=D_N*\mathcal T_h \widetilde Q_{h+1}-\mathcal T_h \widetilde Q_{h+1}+\mathcal T_h \widetilde Q_{h+1}-\mathcal T_h Q_{h+1}^*.
    \end{align*}
    Passing to the norms, $\|\mathcal T_h \widetilde Q_{h+1}-\mathcal T_h Q_{h+1}^*\|_\infty \le \|\widetilde Q_{h+1}- Q_{h+1}^*\|_\infty,$
    as the Bellman optimality operator is non-expansive in infinity norm. Coming to the other part we know, by the assumption of smoothness that the Bellman optimality operator is bounded over $\mathcal C^{\nu,1}(\Ss \times \As) \to \mathcal C^{\nu,1}(\Ss \times \As)$, so that $        \|\mathcal T_h \widetilde Q_{h+1}\|_{\mathcal C^{\nu,1}} \le C_{\mathcal T} (\|\widetilde Q_{h+1}\|_{\mathcal C^{\nu,1}}+1)$.
    Therefore, we have, by theorem \ref{thm:approxval}:
    \begin{align*}
        \|D_N*\mathcal T_h \widetilde Q_{h+1}-\mathcal T_h \widetilde Q_{h+1}\|_\infty &\le \frac{2^{\nu+2}\Lambda_dK}{N^{\nu+1}}\|\mathcal T_h \widetilde Q_{h+1}\|_{\mathcal C^{\nu,1}}\\
        &\le C_{\mathcal T} \left (\frac{2^{\nu+2}\Lambda_dK}{N^{\nu+1}}\|\mathcal T_h \widetilde Q_{h+1}\|_{\mathcal C^{\nu,1}}+1\right).
    \end{align*}
    Putting all pieces together ends the proof.
\end{proof}

We prove a second theorem which is the main ingredient of the central proof.

\begin{thm}\label{thm:inductor}
Fix $0<\varepsilon<1,\delta>0$. Apply Algorithm \ref{alg:LSVI} on a $\nu-$Smooth MDP. Then, for a proper choice of $N=\bigo(\varepsilon^{-\frac{1}{\nu+1}})$, with probability at least $1-\delta$, we are able to learn a function $\widehat Q_{h}$ such that $\| Q_{h}^*-\widehat Q_{h}\|_{L^\infty}<\varepsilon_h$ and $\| \widetilde Q_{h}-\widehat Q_{h}\|_{L^\infty}<\varepsilon_h$ for
$$\varepsilon_h:=\frac{\varepsilon}{(3\Lambda_d)^{h}},$$
With a number of samples per layer $h$ given by
    $$n= \bigot \left (K(H,d)\varepsilon^{-\frac{d+2(\nu+1)}{\nu+1}}\right ),$$
where $K(H,d)$ is a constant that is exponential in both quantities.
\end{thm}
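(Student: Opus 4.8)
The plan is to proceed by backward induction on $h$, from $h=H+1$ down to $h=1$, carrying both invariants $\|Q_h^*-\widehat Q_h\|_{L^\infty}<\varepsilon_h$ and $\|\widetilde Q_h-\widehat Q_h\|_{L^\infty}<\varepsilon_h$ simultaneously. The base case $h=H+1$ is trivial since all three functions vanish. For the inductive step, the key decomposition is the triangle inequality
\[
\|Q_h^*-\widehat Q_h\|_{L^\infty}\le \|Q_h^*-\widetilde Q_h\|_{L^\infty}+\|\widetilde Q_h-\widehat Q_h\|_{L^\infty},
\]
where the first term is controlled by Theorem~\ref{thm:approxQ} (it is $O(\Psi_h N^{-\nu-1})$, which we force below $\varepsilon/2$ by the choice $N=\bigo(\varepsilon^{-1/(\nu+1)})$ with the constant absorbing $\Psi_h$, hence the $K(H,d)$ dependence), so the whole difficulty reduces to bounding $\|\widetilde Q_h-\widehat Q_h\|_{L^\infty}$, the estimation error of one perturbed LSVI step.

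For this core bound I would argue as follows. First, observe that by Line~\ref{algline:unbiased} and the convolution trick of Section~\ref{sec:proconv}, the target $Q^{\textsc{target}}_h(s_h,a_h)$ is an unbiased estimate of $[D_N*\mathcal T_h \widehat V_{h+1}](s_h,a_h)$ — where $\widehat V_{h+1}$ is the (already computed) empirical next-stage value — conditionally on the perturbations. Since $D_N*\mathcal T_h(\cdot)$ maps into $\mathbb T_N$ by Proposition~\ref{prop:valprop}(1), the regression function is \emph{exactly} linear in $\bm\varphi_N$: there is no misspecification, which is the whole point of the construction. The error then splits into (a) the noise-averaging error of least squares on the quasi-optimal design, and (b) the propagated error from using $\widehat V_{h+1}$ in place of the idealized $\widetilde V_{h+1}$. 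For (a), I would use the standard LSVI concentration bound (à la \citet{munos2005error,lattimore2020learning}): with the quasi-optimal design $\rho$ from Theorem~\ref{thm:KW}, for every $\bm x=\bm\varphi_N(s,a)$ one has $\|\bm x\|_{\Sigma^{-1}}^2\le 2\widetilde N$, so a union bound over a covering of the parameter space yields, with probability $1-\delta/H$, a uniform bound of order $\widetilde N\sqrt{\log(\cdot)/n}$ times the sub-Gaussian scale of the targets. The target scale is $O(\beta_++\beta_-)$ times $(1+\|\widehat V_{h+1}\|_\infty)$, and $\beta_\pm$ are $O(1)$ since $\|D_N\|_{L^1}\le\Lambda_d$; the value bound comes from Theorem~\ref{thm:approxQ}(1) and the induction. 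For (b), since $\mathcal T_h$ is non-expansive in $L^\infty$ and $D_N*(\cdot)$ has $L^1\to L^\infty$ operator norm $\le\Lambda_d$, we get $\|D_N*\mathcal T_h\widehat V_{h+1}-D_N*\mathcal T_h\widetilde V_{h+1}\|_\infty\le \Lambda_d\|\widehat V_{h+1}-\widetilde V_{h+1}\|_\infty\le\Lambda_d\varepsilon_{h+1}$, using $\|\max_a f-\max_a g\|_\infty\le\|f-g\|_\infty$ and the inductive hypothesis.

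Combining, $\|\widetilde Q_h-\widehat Q_h\|_\infty$ is at most (LSVI sampling error) $+\,\Lambda_d\varepsilon_{h+1}$, and choosing $n=\bigot(K(H,d)\varepsilon^{-(d+2(\nu+1))/(\nu+1)})$ — where $\widetilde N=O(N^d)=O(\varepsilon^{-d/(\nu+1)})$ contributes the $\widetilde N^2/n$-type term, forcing the extra $\varepsilon^{-2}$ and hence the exponent $-(d+2(\nu+1))/(\nu+1)$ — makes the sampling error below $\varepsilon_{h+1}\Lambda_d/2$, so that $\|\widetilde Q_h-\widehat Q_h\|_\infty<\tfrac32\Lambda_d\varepsilon_{h+1}=\tfrac{\varepsilon}{2(3\Lambda_d)^h}<\varepsilon_h$. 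The $\varepsilon^{-1/(\nu+1)}$ choice of $N$ simultaneously makes $\|Q_h^*-\widetilde Q_h\|_\infty<\varepsilon_h$ via Theorem~\ref{thm:approxQ}(2) (again with $K(H,d)$ absorbing the $\Psi_h$ and $(3\Lambda_d)^h$ factors), closing both invariants. A final union bound over $h\in[H]$ gives the overall probability $1-\delta$. I expect the main obstacle to be making the LSVI concentration step fully rigorous with the \emph{perturbed} targets: one must verify that conditioning on the random perturbations $\eta_\pm$ still leaves a clean sub-Gaussian martingale structure (the perturbations are independent across samples and the MDP's transition noise is independent of them), and track carefully that the effective noise scale stays $O(1)$ in $N$ — this is exactly where the $L^1$-boundedness $\|D_N\|_{L^1}\le\Lambda_d$ of the de la Vallée-Poussin kernel (as opposed to the $\log^d N$ blow-up for Dirichlet) is indispensable, since $\beta_++\beta_-\le\|D_N\|_{L^1}+1$ would otherwise carry a spurious $\log^d N$ into the variance and, worse, an unbounded constant.
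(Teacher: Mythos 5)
Your overall architecture matches the paper's: backward induction carrying both invariants, unbiasedness of the perturbed targets for $D_N*\mathcal T_h\widehat Q_{h+1}$ via the convolution trick, exact linearity in $\bm\varphi_N$ (no misspecification) from Proposition~\ref{prop:valprop}, the split into a least-squares estimation error plus a propagated error controlled by non-expansiveness of $\mathcal T_h$ and $\|D_N\|_{L^1}\le\Lambda_d$, and the final triangle inequality with Theorem~\ref{thm:approxQ}. The independence remark (a generative model makes $\widehat Q_{h+1}$ independent of the step-$h$ data) and the role of the $O(1)$ bound on $\beta_++\beta_-$ in keeping the noise scale bounded are also exactly the points the paper makes.

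There is, however, a genuine gap at the concentration step, and it is not cosmetic. You state a uniform error of order $\widetilde N\sqrt{\log(\cdot)/n}$ (equivalently, an ``$\widetilde N^2/n$-type'' term); with $\widetilde N=\bigo(\varepsilon^{-d/(\nu+1)})$ this yields $n=\bigot(\varepsilon^{-(2d+2(\nu+1))/(\nu+1)})$, which is \emph{not} the claimed exponent $-(d+2(\nu+1))/(\nu+1)$ — your rate and your conclusion are mutually inconsistent. The theorem needs the sharper pointwise rate $\sqrt{\widetilde N\log(k/\delta)/n_{\text{tot}}}\,\sigma$, which the optimal design delivers on any \emph{finite} set via a union bound (Proposition~\ref{prop:regression}, first item, plus Theorem~\ref{thm:KW}); the hard part is extending it to the sup over the continuous set $\Zs$ without paying the extra $\sqrt{\widetilde N}$ (or $\sqrt{n}$) that a parameter-space or self-normalized argument incurs. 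The paper does this in two stages you omit: first a crude everywhere-valid bound \eqref{eq:yang3} is used, together with Parseval's identity for the orthonormal trigonometric features, to bound $\|\widehat{\bm\theta}_h\|_2$ and hence the $\mathcal C^{\nu,1}$-norm (so the Lipschitz constant $L_{\text{Bell}}\lesssim N^{\nu+1}\sqrt{\widetilde N}\chi_2$) of the Bellman-error function $\widehat Q_h-D_N*\mathcal T_h\widehat Q_{h+1}$; then the sharp bound \eqref{eq:yin3} on an $(\varepsilon/L_{\text{Bell}})$-cover of $\Zs$ (of cardinality $k=(3L_{\text{Bell}}/\varepsilon)^d$, entering only through $\log k$) is extended to all of $\Zs$ by Lipschitzness. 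Without this machinery — or some substitute achieving the $\sqrt{\widetilde N/n}$ rate uniformly over the continuum — the sample complexity you claim does not follow from the steps you give.
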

\begin{proof}
    We prove the theorem by induction. Case $h=H+1$ is trivial, as both functions $\widetilde Q_{H+1},\widehat Q_{H+1}$ are identically zero.

    \textbf{Part \rom{1}: Bounding the Bellman error}. By design, at time step $h$, the algorithm performs a linear regression from $\bm \varphi_N(s,a)$ to $D_N*\mathcal T_h \widehat Q_{h+1}(s,a)$. Let us call $z_h=(s_h,a_h)$ an action pair in the dataset $\mathcal D_h$. The training set for the regression takes the form:
    $$\{\bm \varphi_N(z_h);\ Q^\textsc{target}_{h}[z_h]\}_{z_h\in \mathcal D_h}=\{\bm \varphi_N(z_h);\ \widetilde r_{h+1}[z_h] + \widetilde V_{h+1}[z_h]\}_{z_h\in \mathcal D_h},$$
    where we have split $Q^\textsc{target}_{h}[z_h]$ in two terms,
    $$\widetilde V_{h+1}[z_h]=\beta_+V_{h+1}(s_{h+1}^+) -\beta_-V_{h+1}(s_{h+1}^-)\qquad \widetilde r_{h+1}[z_h]=\beta_+r_h^+-\beta_-r_h^-,$$
    with $V_h(\cdot)=\max_{a\in \As} \bm \varphi_N(\cdot,a)^\top \widehat {\bm \theta}_{h+1}=\max_{a\in \As} \widehat Q_{h+1}(\cdot,a)$,
    and $s_{h+1}^+\sim p_h(\cdot|z_h+\eta_+),\  r_{h}^+= R_h(z_h+\eta_+)$ and $s_{h+1}^-\sim p_h(\cdot|z_h+\eta_-),\ r_{h}^-= R_h(z_h+\eta_-)$. By definition of the injected noise we have:
    \begin{align*}
        \E[\widetilde r_{h+1}[z_h]| z_h] &= \E[\beta_+r_h^+-\beta_-r_h^-| z_h]\\
        &=D_N*(R_{h+1})(z_h),
    \end{align*}
    while for the other part,
    \begin{align*}
        \E[\widetilde V_{h+1}[z_h] | z_h]&=\E[\beta_+V_{h+1}(s_{h+1}^+) -\beta_-V_{h+1}(s_{h+1}^-)| z_h]\\
        &=\beta_+\E[V_{h+1}(s_{h+1}^+)| z_h] -\beta_-\E[V_{h+1}(s_{h+1}^-)| z_h]\\
        &=\beta_+\E_{\substack{s_{h+1}^+\sim p_h(\cdot|z_h+\eta_+)\\ \eta_+\sim D_N^+}}[\max_{a\in \As} \widehat Q_{h+1}(s_{h+1}^+,a)| z_h]\\
        &\qquad -\beta_-\E_{\substack{s_{h+1}^-\sim p_h(\cdot|z_h+\eta_-)\\ \eta_-\sim D_N^-}}[\max_{a\in \As} \widehat Q_{h+1}(s_{h+1}^-,a)| z_h]\\
        &=\beta_+\int_{\Ss\times \As} D_N^+(y)\int_{\Ss}\max_{a\in \As} \widehat Q_{h+1}(x,a)p(x|z_h+y)dx\ dy\\
        &\qquad -\beta_-\int_{\Ss\times \As} D_N^-(y)\int_{\Ss}\max_{a\in \As} \widehat Q_{h+1}(x,a)p(x|z_h+y)dx\ dy\\
        &=\int_{\Ss\times \As} D_N(y)\int_{\Ss}\max_{a\in \As} \widehat Q_{h+1}(x,a)p(x|z_h+y)dx\ dy\\
        & = D_N*[\max_{a\in \As} \widehat Q_{h+1}](z_h).
    \end{align*}

    Where the first passage is by definition of $\widetilde V_h$, the third by definition of $V_h$, the fourth by rewriting the expectation as an integral and the last one by definition of convolution (kernel $D_N$ is symmetric, so adding or subtracting makes no difference). By definition of Bellmann operator, this proves that

    \begin{equation}
        \E[\widetilde r_{h+1}[z_h]+\widetilde V_{h+1}[z_h]| z_h] = D_N*\mathcal T\widehat Q_{h+1}(z_h)\label{eq:unbiased}.
    \end{equation}

    Moreover, note that we have
    
    \begin{align*}       
    \|\widetilde V_{h+1}(\cdot)\|_{L^\infty}&\le  \|\beta_+V_{h+1}(\cdot)\|_{L^\infty} +\|\beta_-V_{h+1}(\cdot)\|_{L^\infty}\\
    & \le (\beta_++\beta_-)\|V_{h+1}(\cdot)\|_{L^\infty}\\
    &\le \Lambda_d \|\widehat Q_{h+1}(\cdot)\|_{L^\infty}\\
    &\le \Lambda_d (\|\widehat Q_{h+1}(\cdot)-Q_{h+1}^*(\cdot)\|_{L^\infty}+\|Q_{h+1}^*(\cdot)\|_{L^\infty})\\
    &\le \Lambda_d (\varepsilon_h+\|Q_{h+1}^*(\cdot)\|_{L^\infty})\\
    &\le \Lambda_d(\varepsilon_h+H-h) \\
    &\le \Lambda_d(H+1)
    \end{align*}
    
    where we have used the definition of $\widetilde V_{h+1}$ for the first inequality, the fact that $\beta_++\beta_-=\Lambda_d$ and the definition of $\widehat V_{h+1}$ for the third one, the reverse triangular inequality for the fourth one, the inductive assumption for the fifth one, and for the last one the bound on $\|Q_{h+1}^*(\cdot)\|_{L^\infty}$. Having assumed that the reward at any step is bounded in $[0,1]$, the whole random variable 
    
    $$Y[z_h]:=\widetilde r_{h+1}[z_h] + \widetilde V_{h+1}[z_h]-\E[\widetilde r_{h+1}[z_h]+\widetilde V_{h+1}[z_h] | z_h]$$
    
    is bounded in $[-\sigma, \sigma]$, with $\sigma:=1+\Lambda_d(H+1)$, so it is at most $\sigma-$subgaussian. 
    Before using these two results to apply standard bound for linear regression, we have to introduce a discretization of the state action space $\Zs$. This is going to be carachterized by the following quantities:
    \begin{itemize}
        \item $k$ the number of points in the discretization.
        \item $\varepsilon_k$ the minimum value such that we can make a $\varepsilon_k-$cover of $\Zs$ with $k$ points.
        \item $\Zs_{\varepsilon_k}$ the discretization itself.
    \end{itemize}

    At this point, the setting allows us to apply Proposition \ref{prop:regression}:
    \begin{enumerate}
        \item We take $\Xs$ to be the set $\Xs_\varphi$ defined in Algorithm \ref{alg:DATA2}, while $\Xs'$ the subset of $\Xs$ given by $\Xs':=\{\bm \varphi_N(s,a):\ (s,a)\in \Zs_{\varepsilon_k}\}$. By definition, we have $|\Xs'|=k$.
        \item The mean objective $\E[\widetilde r_{h+1}[z_h] + \widetilde V_{h+1}[z_h] | z_h]$ corresponds to
        $$D_N*\mathcal T_h \widehat Q_{h+1}(z_h)=\bm \varphi_N(z_h)^\top\bm \theta_h,$$
        for some unknown $\bm \theta_h$, as the kernel $D_N$ projects all the functions in the span of $\bm \varphi_N(\cdot)$. Remark that $\widehat Q_{h+1}$ is independent of the data used for regression at step $h$ since it is computed with independent data at step $h+1$ (this is only possible with a generative model), so $\bm\theta_h$ can be treated as a fixed regression target in the backward induction.
        \item The noise $Y[z_h]$ is $\sigma-$subgaussian.
    \end{enumerate}

    From Proposition \ref{prop:regression}, for the estimated $\widehat{\bm \theta}_h$, we have
    \begin{equation}
        \Prob\left (\sup_{\bm x\in \Xs'}|\langle \bm \theta_h, \bm x\rangle - \langle  \widehat {\bm \theta}_h, \bm x\rangle| > \sqrt{\log(2k/\delta)}\sup_{\bm x\in \Xs'}\|\bm x\|_{V_n^{-1}}\sigma\right)\le \delta,\label{eq:yin}
    \end{equation}

    and

    \begin{equation}
        \Prob\left (\sup_{\bm x\in \Xs}|\langle \bm \theta_h, \bm x\rangle - \langle  \widehat {\bm \theta}_h, \bm x\rangle| > \sqrt{n\log(2n/\delta)}\sup_{\bm x\in \Xs}\|\bm x\|_{V_n^{-1}}\sigma\right)\le \delta.\label{eq:yang}
    \end{equation}

    where the matrix $V_n$ corresponds to $V_n=\sum_{z_h\in \mathcal D_h} \bm \varphi_N(z_h) \bm \varphi_N(z_h)^\top.$ 
    
    At this point, thanks to the fact that the dataset $\mathcal D_h$ is an optimal design for $\mathcal X$, we can apply Theorem \ref{thm:KW}, which ensures

    $$\sup_{\bm x\in \Xs'}\|\bm x\|_{V_n^{-1}}\le \sup_{\bm x\in \Xs}\|\bm x\|_{V_n^{-1}}\le \frac{\sqrt{2\widetilde N}}{\sqrt{n_{\text{tot}}}}.$$

    Here, note that $\sqrt{2\widetilde N}$ comes from the optimal design, and the denominator from the fact that for each point $z$ in the support of $\rho$ we are taking $\lceil n_{\text{tot}}\rho(z)\rceil$ samples in Algorithm \ref{alg:DATA2}. 
    Substituting into the previous equations results in 

    \begin{equation}
        \Prob\left (\sup_{\bm x\in \Xs'}|\langle \bm \theta_h, \bm x\rangle - \langle  \widehat {\bm \theta}_h, \bm x\rangle| > \frac{\sqrt{2\widetilde N\log(2k/\delta)}\sigma}{\sqrt{n_{\text{tot}}}}\right)\le \delta,\label{eq:yin2}
    \end{equation}

    and

    \begin{equation}
        \Prob\left (\sup_{\bm x\in \Xs}|\langle \bm \theta_h, \bm x\rangle - \langle  \widehat {\bm \theta}_h, \bm x\rangle| > \frac{\sqrt{2\widetilde Nn\log(2n/\delta)}\sigma}{\sqrt{n_{\text{tot}}}}\right)\le \delta.\label{eq:yang2}
    \end{equation}

    By design of the algorithm, $\widehat Q_h(\cdot,\cdot)= \bm \varphi_N(\cdot,\cdot)^\top \widehat{\bm \theta}_h$ and, by definition, $D_N*\mathcal T_h \widehat Q_{h+1}(\cdot,\cdot)=\bm \varphi_N(\cdot,\cdot)^\top {\bm \theta}_h$. Thus, the same equations write,

    \begin{equation}
        \Prob\left (\sup_{z\in \Zs_{\varepsilon_k}}|\widehat Q_h(z)-D_N*\mathcal T_h \widehat Q_{h+1}(z)| > \frac{\sqrt{2\widetilde N\log(2k/\delta)}\sigma}{\sqrt{n_{\text{tot}}}}\right)\le \delta,\label{eq:yin3}
    \end{equation}

    and

    \begin{equation}
        \Prob\left (\|\widehat Q_h(\cdot)-D_N*\mathcal T_h \widehat Q_{h+1}(\cdot)\|_{L^\infty(\Zs)} > \frac{\sqrt{2\widetilde Nn\log(2n/\delta)}\sigma}{\sqrt{n_{\text{tot}}}}\right)\le \delta.\label{eq:yang3}
    \end{equation}

    \textbf{Part \rom{2}: Prove that $\widehat Q_h$ is Lipschitz continuous with a bounded constant.} 
    The two Equations \eqref{eq:yin3} and \eqref{eq:yang3} have very different roles in what follows. For now, we need only to have a universal upper bound on the infinity norm of the $\widehat Q_h$ function, so we forget about Equation \eqref{eq:yin3} and focus on equation \eqref{eq:yang3}. Indeed, the latter result allows us to prove that, with probability at least $1-\delta$,
    \begin{align}
        \|\widehat Q_h(\cdot)-\widetilde Q_h(\cdot)\|_{L^\infty(\Zs)} &\le \|\widehat Q_h(\cdot)-D_N*\mathcal T_h \widehat Q_{h+1}(\cdot)\|_{L^\infty(\Zs)}+\|\widetilde Q_h(\cdot)-D_N*\mathcal T_h \widehat Q_{h+1}(\cdot)\|_{L^\infty(\Zs)} \nonumber\\
        & = \|\widehat Q_h(\cdot)-D_N*\mathcal T_h \widehat Q_{h+1}(\cdot)\|_{L^\infty(\Zs)} \nonumber\\
        &\ \ \ +\|D_N*\mathcal T_h \widetilde Q_{h+1}(\cdot)-D_N*\mathcal T_h \widehat Q_{h+1}(\cdot)\|_{L^\infty(\Zs)} \nonumber\\
        & \le \|\widehat Q_h(\cdot)-D_N*\mathcal T_h \widehat Q_{h+1}(\cdot)\|_{L^\infty(\Zs)} \nonumber\\
        &\ \ \ +\Lambda_d\|\widetilde Q_{h+1}(\cdot)-\widehat Q_{h+1}(\cdot)\|_{L^\infty(\Zs)} \nonumber\\
        & \le \chi(\widetilde N,n,n_{\text{tot}},\delta)+\Lambda_d\varepsilon \nonumber\\
        & \le \chi(\widetilde N,n,n_{\text{tot}},\delta)+\Lambda_d, \label{eq:boundqqhat}
    \end{align}
    where the second passage follows by definition of $\widetilde Q_{h}$, the third from the bound on the $L^1$ norm of $D_N$ (Equation~\ref{eq:boundD}) and the last from Equation \eqref{eq:yang3}, having defined $\chi(\widetilde N,n,n_{\text{tot}},\delta):=\frac{\sqrt{2\widetilde Nn\log(2n/\delta)}\sigma}{\sqrt{n_{\text{tot}}}}$ and by induction. 
    Note that the previous bound is \textit{not} tight. We will need a much better result to prove the theorem; furthermore, we have not used Equation~\eqref{eq:yin3} yet. Still, we can use the previous result to prove the smoothness of $\widehat Q_h$. First, notice that 

    \begin{align*} Vol(\Zs)\left\|\widehat Q_h(\cdot,\cdot)\right\|_{L^\infty}^2 &\ge \left\|\widehat Q_h(\cdot,\cdot)\right\|_{L^2}^2\\
    &\overset{(Par)}{=} \left\|\sum_{i=1}^{\widetilde N}\bm [\bm {\widehat \theta_h}]_i[\bm \varphi_N]_i(\cdot,\cdot)\right\|_{L^2}^2\\
        &= \sum_{i=1}^{\widetilde N}\bm [\bm {\widehat \theta_h}]_i^2\|[\bm \varphi_N]_i(\cdot,\cdot)\|_{L^2}^2\\
        &= \sum_{i=1}^{\widetilde N}\bm [\bm {\widehat \theta_h}]_i^2\\
        &=\|\bm {\widehat \theta_h}\|_2^2
    \end{align*}
    where the second passage comes from Parseval's theorem, exploiting the fact that the $\widetilde N$ components of $\bm \varphi_{N}$ are all orthogonal in $L^2(\Ss\times \As)$ and normalized to $1$. The final inequality is just a standard $L^2-L^\infty$ norm inequality. In the end this proves $\|\bm {\widehat \theta_h}\|_2\le \sqrt{Vol(\Zs)}\left\|\widehat Q_h(\cdot,\cdot)\right\|_{L^\infty}.$ This result, combined with Equation \eqref{eq:boundqqhat}, brings to
    \begin{align*}
        \|\bm {\widehat \theta_h}\|_2^2 &\le\left\|\widehat Q_h(\cdot,\cdot)\right\|_{L^2}^2\le Vol(\Zs)\left\|\widehat Q_h(\cdot,\cdot)\right\|_{L^\infty}^2,
    \end{align*}
    
    as well as the fact that $\widetilde Q_h$ is bounded by $\frac{(\Lambda_dC_{\mathcal T})^{H-h+1}-\Lambda_dC_{\mathcal T}}{\Lambda_dC_{\mathcal T}-1}$ (theorem \ref{thm:approxQ}), entails that
    \begin{align*}
        \|\bm {\widehat \theta_h}\|_2^2 &\le  Vol(\Zs)\|\widehat Q_h(\cdot,\cdot)\|_{L^\infty}^2\\
        &\le Vol(\Zs)\left(\|\widehat Q_h(\cdot)-\widetilde Q_h(\cdot)\|_{L^\infty}+\|\widetilde Q_h(\cdot,\cdot)\|_{L^\infty}\right)^2\\
        & \le Vol(\Zs)\left(\chi(\widetilde N,n,n_{\text{tot}},\delta)+\Lambda_d+\frac{(\Lambda_dC_{\mathcal T})^{H-h+1}-\Lambda_dC_{\mathcal T}}{\Lambda_dC_{\mathcal T}-1}\right)^2=:\chi_2(\widetilde N,n,n_{\text{tot}},\delta,h).
    \end{align*}

    This equation is fundamental, as it allows us to bound the $\mathcal C^{\nu,1}$-norm of $\widehat Q_h(\cdot,\cdot)$ everywhere. Indeed,

    \begin{align}
        \|\widehat Q_h(\cdot,\cdot)\|_{\mathcal C^{\nu,1}} &= \max_{| \alpha|\le \nu+1}\|D^{\alpha} \widehat Q_h(\cdot,\cdot)\|_{L_\infty}\nonumber\\
        &= \max_{| \alpha|\le \nu+1}\|D^{\alpha} \bm \varphi_N(\cdot,\cdot)^\top \bm{\widehat \theta}_h\|_{L_\infty}\nonumber\\
        &\overset{CS}{\le} \max_{| \alpha|\le \nu+1}\|\|D^{\alpha} \bm \varphi_N(\cdot,\cdot) \|_2\|\bm{\widehat \theta}_h\|_2\|_{L_\infty}\nonumber\\
        &\le \chi_2(\widetilde N,n,n_{\text{tot}},\delta,h)\max_{| \alpha|\le \nu+1}\|\|D^{\alpha} \bm \varphi_N(\cdot,\cdot)^\top \|_2\|_{L_\infty}\nonumber\\
        & \le \chi_2(\widetilde N,n,n_{\text{tot}},\delta,h)\sqrt{\widetilde N}\max_{| \alpha|\le \nu+1}\|\|D^{\alpha} \bm \varphi_N(\cdot,\cdot) \|_\infty\|_{L_\infty}\nonumber\\
        &\le N^{\nu+1}\sqrt{\widetilde N}\chi_2(\widetilde N,n,n_{\text{tot}},\delta,h)\label{eq:qbound}.
    \end{align}
    Where the third passage is by the Cauchy-Schwartz inequality, the fourth one from the bound on $\|\bm{\widehat \theta}_h\|_2$ and the last one from the fact that the derivatives of order $|\bm \alpha|\le \nu+1$ of the Fourier features of degree $N$ do not exceed $N^{\nu+1}$ (by just computing the derivatives, $\sin(Nx)\to N\cos(Nx)\to -N^{2}\sin(Nx),\dots$).

    \textbf{Part \rom{3}: Extend equation \eqref{eq:yang3} to all $\Zs$ trough the Lipschitzness of $\widehat Q_h$.}

    At this point, we have to fix the previously mentioned cover $\Zs_{\varepsilon_k}$ of $\Zs$. It is well-known \citep{wu2017lecture} that to construct an $\varepsilon_k$-cover of $\Zs=[-1,1]^d$ we need
    $k=3^d \varepsilon_k^{-d}$ points. In our case, we are interested in a cover that contains a near-maximum point for the function $\widehat Q_h(\cdot)-D_N*\mathcal T_h \widehat Q_{h+1}(\cdot)$, therefore the value of $\varepsilon$ must be determined based on the Lipschitz constant of the latter function. We have proved in Equation \ref{eq:qbound} that $\|\widehat Q_h(\cdot,\cdot)\|_{\mathcal C^{\nu,1}} \le N^{\nu+1}\sqrt{\widetilde N}\chi_2(\widetilde N,n,n_{\text{tot}},\delta,h)$, so

    \begin{align*}
        \|\widehat Q_h(\cdot)-D_N*\mathcal T_h \widehat Q_{h+1}(\cdot)\|_{\mathcal C^{\nu,1}} &\le \|\widehat Q_h(\cdot)\|_{\mathcal C^{\nu,1}}+\|D_N*\mathcal T_h \widehat Q_{h+1}(\cdot)\|_{\mathcal C^{\nu,1}}\\
        &\le \|\widehat Q_h(\cdot)\|_{\mathcal C^{\nu,1}}+\Lambda_d\|\mathcal T_h \widehat Q_{h+1}(\cdot)\|_{\mathcal C^{\nu,1}}\\
        &\le \|\widehat Q_h(\cdot)\|_{\mathcal C^{\nu,1}}+\Lambda_dC_{\mathcal T}(\|\widehat Q_{h+1}(\cdot)\|_{\mathcal C^{\nu,1}}+1)\\
        &\le (1+\Lambda_dC_{\mathcal T})\sqrt{\widetilde N}\chi_2(\widetilde N,n,n_{\text{tot}},\delta,h)+\Lambda_dC_{\mathcal T}\\
        & \le (1+2\Lambda_dC_{\mathcal T})\sqrt{\widetilde N}\chi_2(\widetilde N,n,n_{\text{tot}},\delta,h).
    \end{align*}

    where the second passage follows by Equation \eqref{eq:Dprop} and the third by definition of $C_{\mathcal T}$, 
    while the last one follows from Equation \eqref{eq:qbound}. As the previous display bounds, by definition, the Lipschitz constant of the same function, we can just take
    $$L_{\text{Bell}}:= (1+2\Lambda_dC_{\mathcal T})N^{\nu+1}\sqrt{\widetilde N}\chi_2(\widetilde N,n,n_{\text{tot}},\delta,h),$$
    and say that the Bellmann error $\widehat Q_h(\cdot)-D_N*\mathcal T_h \widehat Q_{h+1}(\cdot)$ is Lipschitz continuous with this constant.
    Therefore, if we want to bound its variation with $\varepsilon$, we need a $\frac{\varepsilon}{L_{\text{Bell}}}$-cover, which requires

    $$k=\left (\frac{3L_{\text{Bell}}}{\varepsilon}\right)^d$$

    points. In particular, we want to fix

    $$\varepsilon = \frac{\sqrt{2\widetilde N\log(2k/\delta)}\sigma}{\sqrt{n_{\text{tot}}}},$$

    so the number of points required is

    $$k=\left (\frac{3L_{\text{Bell}}\sqrt{n_{\text{tot}}}}{\sqrt{2\widetilde N\log(2k/\delta)}\sigma}\right)^d.$$

    With this choice of $k$, we have, with probability $1-2H\delta$ (the coefficient $2H$ is from a union bound, since both Equation \eqref{eq:yin3} and \eqref{eq:yang3} have to hold for every $h$),

    \begin{align}
        \|\widehat Q_h(\cdot)-D_N*\mathcal T_h \widehat Q_{h+1}(\cdot)\|_{L^\infty(\Zs)}&\le \sup_{z\in \Zs_{\varepsilon_k}}|\widehat Q_h(z)-D_N*\mathcal T_h \widehat Q_{h+1}(z)| + \varepsilon \nonumber\\
        & \overset{\eqref{eq:yin3}}{\le} \frac{\sqrt{2\widetilde N\log(2k/\delta)}\sigma}{\sqrt{n_{\text{tot}}}} + \varepsilon \nonumber\\
        & = \label{eq:fullbell}\frac{\sqrt{8\widetilde N\log(2k/\delta)}\sigma}{\sqrt{n_{\text{tot}}}},
    \end{align}

    where the first passage comes from the definition of the cover, the second from Equation \eqref{eq:yin3}, the last one by definition of $\varepsilon$. We have been finally able to bound this pseudo-Bellmann error for all $z\in \Zs$. Even if $k$, the number of points in the cover, depends on $\widetilde N, n,n_{\text{tot}}$, as its dependence takes the form of $\log(k)$, its presence does not affect the sample complexity significantly.

    \textbf{Part \rom{4}: From the Bellmann error to the Q-function error
    }

    At this point, it is not difficult to bound by induction the difference between the estimated Q-function $\widehat Q_h$ and the target $\widetilde Q_h$. Indeed,

    \begin{align}
        \|\widetilde Q_{h}(\cdot)-\widehat Q_h(\cdot)\|_{L^\infty} 
        & \le \|\widetilde Q_{h}(\cdot)-D_N*\mathcal T_h \widehat Q_{h+1}(\cdot)\|_{L^\infty}+\|\widehat Q_h(\cdot)-D_N*\mathcal T_h \widehat Q_{h+1}(\cdot)\|_{L^\infty}\nonumber\\
        & \overset{\eqref{eq:fullbell}}{\le} \|D_N*\mathcal T_h\widetilde Q_{h+1}(\cdot)-D_N*\mathcal T_h \widehat Q_{h+1}(\cdot)\|_{L^\infty}+\frac{\sqrt{8\widetilde N\log(2k/\delta)}\sigma}{\sqrt{n_{\text{tot}}}} \nonumber\\
        & \le \Lambda_d\|\mathcal T_h\widetilde Q_{h+1}(\cdot)-\mathcal T_h \widehat Q_{h+1}(\cdot)\|_{L^\infty}+\frac{\sqrt{8\widetilde N\log(2k/\delta)}\sigma}{\sqrt{n_{\text{tot}}}}\nonumber\\
        & \le \Lambda_d\|\widetilde Q_{h+1}(\cdot)-\widehat Q_{h+1}(\cdot)\|_{L^\infty}+\frac{\sqrt{8\widetilde N\log(2k/\delta)}\sigma}{\sqrt{n_{\text{tot}}}}\nonumber\\
        & \le \Lambda_d\varepsilon_{h+1}+\frac{\sqrt{8\widetilde N\log(2k/\delta)}\sigma}{\sqrt{n_{\text{tot}}}}
    \end{align}

    the second inequality being valid by Equation \eqref{eq:fullbell}, the third one by the usual bound on the $L^1$ norm of $D_N$, the fourth one by the non-expansivity of the Bellmann operator and the last one by the inductive hypothesis.
    We have just proved that, with probability at least $1-2\delta$, the estimation error 
    is bounded as

    \begin{equation}       
    \|\widetilde Q_{h}(\cdot)-\widehat Q_h(\cdot)\|_{L^\infty}\le \Lambda_d\varepsilon_{h+1}+\frac{\sqrt{8\widetilde N\log(2k/\delta)}\sigma}{\sqrt{n_{\text{tot}}}}.\label{eq:qdifffinal}
    \end{equation}

    This term can be naturally coupled with the approximation term between $\widetilde Q_h$ and $Q^*$, which is bounded, by theorem \ref{thm:approxQ} with

    $$\|\widetilde Q_h- Q_h^*\|_{L^\infty}\le \Psi_h N^{-\nu-1}.$$

    At this point, we set $N$ and $n_{\text{tot}}$ in this way (note that $k$ contains terms that are polynomial in $N,n_{\text{tot}}$ but, appearing only in the logarithm, this has a negligible effect):

    $$N=\left \lceil \left ( \frac{\Psi_h}{\Lambda_d\varepsilon_{h+1}} \right )^{\frac{1}{\nu+1}}\right \rceil\qquad n_{\text{tot}}=\left \lceil  \left (\frac{\sigma \sqrt{8\widetilde N\log(2k/\delta)}}{\Lambda_d\varepsilon_{h+1}}\right )^2\right\rceil,$$

    so that $\|\widetilde Q_h- \widehat Q_h\|_{L^\infty}\le 2\Lambda_d\varepsilon_{h+1}\le \varepsilon_{h}$, and by triangular inequality $$\|\widehat Q_h- Q_h^*\|_{L^\infty}\le \Lambda_d\varepsilon_{h+1} + \|\widetilde Q_h- \widehat Q_h\|_{L^\infty} \le 3\Lambda_d\varepsilon_{h+1} = \varepsilon_{h},$$

    as needed to complete the inductive step. This has been done with a number of samples that is at most (proposition \ref{prop:totalsamp})

    $$2(n_{\text{tot}}+4{\widetilde N}\log\log({\widetilde N}))=\bigot(n_{\text{tot}})=\bigot \left (\sigma^2\Psi_h^{\frac{d}{2(\nu+1)}}\varepsilon_h^{-\frac{d+2(\nu+1)}{\nu+1}}\right ),$$

    which, in terms of the original $\varepsilon$, corresponds to

    $$\bigot(n_{\text{tot}})=\bigot \left (\sigma^2\Psi_h^{\frac{d}{2(\nu+1)}}(3\Lambda_d)^{h\frac{d+2(\nu+1)}{\nu+1}}\varepsilon^{-\frac{d+2(\nu+1)}{\nu+1}}\right ).$$

    Note that we have chosen $N$ depending on $h$, which should not in the definition of our algorithm. Still, for the algorithm we can just take the highest value of $N$ among every step $h$, and the corresponding $n_{\text{tot}}$. As the dependence on $\varepsilon$ is always of $\varepsilon^{-\frac{d+2(\nu+1)}{\nu+1}}$, this does not compromise the result.    
    
\end{proof}

Now, we can finally proceed to the proof of the main result.

\mainthm*
\begin{proof}
    We use theorem \ref{thm:inductor} with $\varepsilon\gets \varepsilon/2$, which ensures that under the same probability we are able to learn a function $\widehat Q_{h}$ such that $\| Q_{h}^*-\widehat Q_{h}\|_{L^\infty}<(1/2)\varepsilon/(3\Lambda_d)^h$.
    Since the policy outputted by algorithm \ref{alg:LSVI} is just greedy with respect to $\widehat Q$, we have

    \begin{align*}
        V_h^*(s) - V_h^{\widehat \pi}(s)&= Q_h^*(s,\pi^*(s)) - Q_{h}^{\widehat \pi}(s,\widehat \pi(s)) \\
        & = \underbrace{Q_h^*(s,\pi^*(s)) - Q_h^*(s,\widehat \pi(s))}_{I} + \underbrace{Q_h^*(s,\widehat \pi(s)) - Q_{h}^{\widehat \pi}(s,\widehat \pi(s))}_{II}.
    \end{align*}

    The first part can be easily bounded in this way:
    \begin{align*}
        I&\le \|Q_h^*(\cdot,\cdot) - \widehat Q_{h}(\cdot,\cdot)\|_{L^\infty} + \widehat Q_{h}(s,\pi^*(s)) - \widehat Q_{h}(s,\widehat \pi(s))\\
        &\le \|Q_h(\cdot,\cdot) - \widehat Q_{h}(\cdot,\cdot)\|_{L^\infty},
    \end{align*}
    as the second term is negative by definition of $\widehat \pi$.
    Meanwhile, the second one is bounded by induction:
    \begin{align*}
        II&\le \E_{s'\sim p_h(\cdot|s,\widehat \pi(s))}[V_{h+1}^*(s') - V_{h+1}^{\widehat \pi}(s')]\\
        &\le \|V_{h+1}^*(\cdot) - V_{h+1}^{\widehat \pi}(\cdot)\|_{L^\infty}.
    \end{align*}

    In this way, we have proved that $\|V_h^*(\cdot) - V_h^{\widehat \pi}(\cdot)\|_{L^\infty}\le \|Q_h(\cdot,\cdot) - \widehat Q_{h}^*(\cdot,\cdot)\|_{L^\infty}+\|V_{h+1}^*(\cdot) - V_{h+1}^{\widehat \pi}(\cdot)\|_{L^\infty}$. This implies that

    \begin{equation}
        \|V_h^*(\cdot) - V_h^{\widehat \pi}(\cdot)\|_{L^\infty}\le \sum_{h'=h}^H\|Q_{h'}^*(\cdot,\cdot) - \widehat Q_{h'}(\cdot,\cdot)\|_{L^\infty}\le \frac{1}{2}\sum_{h'=h}^H\varepsilon/(3\Lambda_d)^h.
        \label{eq:standard}
    \end{equation}

    In particular, for $h=1$ we get $\|V_1^*(\cdot) - V_1^{\widehat \pi}(\cdot)\|_{L^\infty}\le \frac{1}{2}\frac{\varepsilon}{1-3\Lambda_d}\le \varepsilon.$ This corresponds to the definition of $\varepsilon-$optimal policy.
    
\end{proof}

\section{Proofs from section \ref{sec:tight}}\label{app:tight}

In this section we prove that every sample complexity bound for algorithms in the Lipschitz MDP setting must grow exponentially with the time horizon $H$. Lipschitz MDPs assume that the transition function of the model, as well as the reward function, are Lipschitz continuous with constants $L_p$ and $L_r$, respectively. Mathematically, the condition on the reward function writes as, for every $ h\in [H],\ s,s'\in \Ss,\ a,a'\in \As$:
$$|R_h(s,a)-R_h(s',a')|\le L_r(\|s-s'\|_2+\|a-a'\|_2).$$
While for the transition function, which maps a state-action pair into a probability distribution, we bound its difference in the Wasserstein metric $\wass(\cdot,\cdot)$~\citep{rachelson2010locality}:
$$\wass(p_h(\cdot|s,a),p_h(\cdot|s',a'))\le L_p(\|s-s'\|_2+\|a-a'\|_2),$$

The Wasserstein metric is a notion of distance for probability measures on metric spaces. For two measures $\mu,\zeta$ on a metric space $\Omega$, it is defined as $\wass(\mu,\zeta):=\sup_{f\in \mathcal C^{0,1}: \|f\|_{\mathcal C^{0,1}}=1} \int_\Omega f(\omega) d(\mu-\zeta)(\omega)$.

The proof strategy is the following: we start from an instance of a Lipschitz bandit problem with a Lipschitz constant that is exponential in $H$, and show that this can be reduced to a standard Lipschtz MDP (where all Lipschitz constants are independent on $H$). Since it has been shown that the sample complexity in a Lipschitz bandit problem is proportional to the Lipschitz constant, this shows that the regret of the Lipschitz MDP is also exponential in $H$. 

\begin{restatable}{thm}{Hexp}
    The minimum number of samples necessary to learn an $\varepsilon-$optimal policy, with probability at least $1-\delta$, in a $(L_p,L_r)-$Lipschitz MDP satisfies
    $n=\Omega(L_p^{d(H-2)}\varepsilon^{-d-2})$.
\end{restatable}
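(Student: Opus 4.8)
The plan is to establish the lower bound by a reduction from Lipschitz bandits, amplified through the horizon by a combination-lock construction, in the spirit sketched above and analogously to \citet{maran2023noregret}. The first step is a version of Theorem~\ref{thm:wang} with explicit dependence on the Lipschitz constant: a bandit on $[-1,1]^d$ whose mean reward is $L$-Lipschitz, observed through $1$-subgaussian noise, requires $\Omega(L^{d}\varepsilon^{-d-2})$ samples to output an $\varepsilon$-optimal action. This is the standard needle-in-a-haystack bound --- partition $[-1,1]^d$ into $\Theta((L/\varepsilon)^d)$ cells, plant a single bump of height $\varepsilon$ and width $\Theta(\varepsilon/L)$ in a uniformly random cell, and note that detecting the planted cell costs $\Omega(\varepsilon^{-2})$ samples per cell --- equivalently, it is the $\nu=0$ instance of \citet{wang2018optimization} after rescaling.

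Next I would build, for prescribed constants $L_p,L_r\ge 2$ (the cases $H\le 2$ being already covered by the pure bandit), a family of $(L_p,L_r)$-Lipschitz MDPs of horizon $H$ indexed by a secret key $\bm a^\star=(a_1^\star,\dots,a_{H-2}^\star)$, each $a_h^\star$ ranging over a $\Theta(1/L_p)$-grid of $[-1,1]^d$ so that there are $\Theta(L_p^{d(H-2)})$ keys, and by a secret bump location $x^\star$ taken from the hard instance above. The state lives in $[-1,1]^d$ and carries a smoothed ``alive/dead'' flag, initialised to alive. For $h\le H-2$ the transition is a Dirac measure: an alive state with $a_h$ within $\Theta(1/L_p)$ of $a_h^\star$ stays alive, otherwise the next state is pushed into an absorbing ``dead'' region; linear interpolation makes every next-state map $O(L_p)$-Lipschitz in $(s,a)$, and --- because the transitions are Dirac --- its Wasserstein-Lipschitz constant equals the Euclidean-Lipschitz constant of that map. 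At stage $H-1$ the action $a_{H-1}$ of an alive trajectory is written into the state (dead trajectories go to a fixed point outside the support of $g$), and the only nonzero reward is $R_H(s_H,\cdot)=g_{x^\star}(s_H)$, the $1$-Lipschitz needle function observed with $1$-subgaussian noise; rescaling keeps rewards in $[0,1]$ and $L_r=O(1)$. In particular $R_H$ itself is only $O(1)$-Lipschitz: the exponential-in-$H$ hardness will come not from a steep reward but from feedback-free combinatorial search over the key.

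It remains to combine the two sources of hardness. The key point is that every observed reward is pure noise until the learner plays the exact key $\bm a^\star$ --- so wrong prefixes convey nothing about $\bm a^\star$ or $x^\star$ --- and that, even having reached stage $H$ alive, telling an alive trajectory from a dead one is exactly the problem of locating the bump of $g_{x^\star}$, hence costs $\Omega(\varepsilon^{-d-2})$ reward samples. Writing $n_{\bm a}$ for the number of episodes in which the learner commits to prefix $\bm a$, a change-of-measure argument (Le Cam / Fano over the product family $\{\text{key}\}\times\{\text{bump}\}$) shows that the transcript is nearly independent of whether $\bm a^\star=\bm a$ unless $n_{\bm a}=\Omega(\varepsilon^{-d-2})$; since an $\varepsilon/2$-optimal output policy forces the true key to have been ``adequately probed'' in this sense and the $\Theta(L_p^{d(H-2)})$ keys are equiprobable, we get $n=\sum_{\bm a} n_{\bm a}=\Omega(L_p^{d(H-2)}\varepsilon^{-d-2})$ with probability at least $1-\delta$, as claimed (matching the headline form $\Omega(C^{dH}\varepsilon^{-d-2})$ of Theorem~\ref{thm:vecio} after absorbing the grid constants). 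The main obstacle is precisely this last step: ruling out rigorously that an adaptive learner amortises the bandit search across candidate prefixes or confirms the right prefix more cheaply than solving the embedded bandit, while it reallocates its budget on the fly; a secondary, purely technical point is the Lipschitz bookkeeping for the smoothed dead-flag dynamics and checking that all constants stay independent of $H$.
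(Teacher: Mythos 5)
Your construction is genuinely different from the paper's, but it has a gap that is fatal in this paper's setting. The theorem has to hold against learners with access to a \emph{generative model}: that is the access model of the entire paper, and the lower bound is invoked precisely to justify the exponential-in-$H$ constant of Theorem~\ref{thm:LSVIbound}, whose algorithm queries a generative model. A combination lock derives its hardness from the learner having to \emph{reach} deep states by playing a correct prefix, and under generative-model access this mechanism collapses. Since your transitions at stages $h\le H-2$ are Dirac measures and the alive/dead flag is part of the state, the learner can query $p_h(\cdot\mid s,a)$ directly from an alive state $s$ for each of the $\Theta(L_p^d)$ grid actions and read off $a_h^\star$ exactly (the linear interpolation even hands it gradient information); the cost of recovering the whole key is therefore \emph{additive} across stages, $O(HL_p^d)$, not multiplicative, and your instance is solvable with $O(HL_p^d+\varepsilon^{-d-2})$ queries overall. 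Even restricted to the episodic setting, the step you flag yourself --- that an adaptive learner must pay $\Omega(\varepsilon^{-d-2})$ per candidate prefix and cannot amortise --- is essentially the whole content of the theorem and is left unproven, so the argument is incomplete on its own terms as well.

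The paper avoids all of this with a more direct reduction that survives the generative model: it takes a \emph{single} Lipschitz bandit whose mean reward is $2L_p^{H-2}$-Lipschitz, rescales it by $L_p^{-H+2}/2$ so that it fits into one stochastic transition at stage $h=2$ with per-step Lipschitz constant $1/2$ (the noise is rescaled identically, so the signal-to-noise ratio is preserved), and then uses $H-2$ deterministic expanding steps $s'=L_ps$ --- each individually $L_p$-Lipschitz --- to amplify the signal back up before the terminal reward $r_H(s,a)=s^{(1)}$. Every query at the only informative stage, whether generative or obtained along a trajectory, is exactly one sample of the original bandit, so the known $\Omega(L^d\varepsilon^{-d-2})$ Lipschitz-bandit lower bound with $L=L_p^{H-2}$ transfers verbatim. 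The exponential dependence thus comes from dynamics that stretch distances, not from a feedback-free search over prefixes; to salvage your construction you would at minimum need transitions that hide the key from local generative queries, at which point you are essentially forced back to an amplification argument of the paper's type.
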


\begin{proof}
Let $f:[-1,1]^d\to [-1,1]$ be an $2L_p^{H-2}-$Lipschitz function and $\eta$ a noise bounded in $[-1,1]$.

Define $\widetilde f := (L_p^{-H+2}/2)f, \widetilde \eta := (L_p^{-H+2}/2)\eta$, so that $\widetilde f:[-1,1]^d\to [-L_p^{-H+2}/2,L_p^{-H+2}/2]$ is a $1/2-$Lipschitz function and $\widetilde \eta$ a noise bounded in $[-L_p^{-H+2}/2,L_p^{-H+2}/2]$. Define the following MDP:
\begin{itemize}
    \item The state and action space coincide: $\Ss = \As = [-1,1]^{d/2}$. In this way, $\Ss\times \As = [-1,1]^d$
    \item The starting state is $[0,\dots 0]$ almost surely.

    \item The transition function is defined in the following way:
    \begin{itemize}
        \item For $h=1$, $p_1(s'|s,a)=\delta(s'=a)$, so that the first action becomes the second state.
        \item For $h=2$, $p_2(s'|s,a)=\delta(s'^{(1)}=\widetilde f(s,a)+\widetilde \eta)\prod_{i=2}^{d/2}\delta_0({s'}^{(i)})$, meaning that the next state has the first coordinate equal to $\widetilde f(s,a)$ plus the noise $\eta$, and all the other ones set to zero. Note that this is coherent with the definition of $f$, which goes $[-1,1]^{d/2}=\Ss \times\As \to \R$. 
        \item For $h=3,\dots H$ we have $p_h(s'|s,a)=\delta(s'=L_ps)$, so that the next state is the previous one times a constant (note that, by the bounds on $\widetilde f,\widetilde \eta$, the state never hits the boundary).
    \end{itemize}
    \item The reward function $r_h$ is zero for the first $H-1$ time steps, and $r_H(s,a)=s^{(1)}$, the first component of the state.
\end{itemize}
By definition, it is easy to check that the MDP is Lipschtz with $L_p=L_p$ and $L_r=1$.

In this very peculiar MDP, where only the first two actions $a_1$ and $a_2$ matter, the return can be expresses as a function of them. Precisely, since the reward is only given at the last time step, we have

$$\text{Return}(a_1,a_2)=L_p^{H-2}(\widetilde f(a_1,a_2)+\widetilde \eta)=\frac{1}{2}(f(a_1,a_2)+\eta).$$

In this way, we have shown that the return for this Lipschitz MDP corresponds exactly to the feedback in the $d$-dimensional Lipschitz bandit problem with reward function $ f/2$ (which is $L_p^{H-2}$-LC) and noise $\eta/2$. 

This shows that any Lipschitz bandit problem with Lipschtz constant $L_P^{H-2}$ can be reduced to to a Lipschtz MDP with constants bounded independently of $H$. Therefore, the sample complexity on the latter problem is at most as high as the one of the former one. As the sample complexity of the latter problem is well-known to be of order $\Omega(L^d\varepsilon^{-2-d})=\Omega(L_p^{(H-2)d}\varepsilon^{-2-d})$, the proof is complete.
\end{proof}

\section{Auxiliary Results}

In the main proof, we need this bound for regression, which is a variant of the classical results for ordinary least squares with fixed design.

\begin{prop}\label{prop:regression}
    Assume to have a dataset of the form $\{\bm x_t,y_t\}_{t=1}^n$, where $\bm x_t\in \Xs$ and $y_t\in \R$, with $y_t=\bm x_t^\top \bm \theta + \xi_t$, for some unknown $\bm \theta \in \R^{\widetilde N}$, and a random noise $\{\xi_t\}_{t=1}^n$ with independent $\sigma-$subgaussian components and independent from $\{\bm x_t\}_{t=1}^n$. Suppose that $\bm x_t$ for $t=1,\dots n$ span the whole space $\R^{\widetilde N}$ and define the LS estimator 
    $$\widehat {\bm \theta}:=\left (\sum_{t=1}^n \bm x_t \bm x_t^\top \right )^{-1} \sum_{t=1}^n \bm x_t y_t.$$
    Then, for every $\delta>0$, fixing an arbitrary subset $\Xs'\subset \Xs$ with $|\Xs|=k$ we have
    \begin{itemize}
        \item $\Prob\left (\sup_{\bm x\in \Xs'}|\langle \bm \theta, \bm x\rangle - \langle  \widehat {\bm \theta}, \bm x\rangle| > \sqrt{\log(2k/\delta)}\sup_{\bm x\in \Xs'}\|\bm x\|_{V_n^{-1}}\sigma\right)\le \delta.$
        \item $\Prob(\sup_{\bm x\in \Xs} |\langle \bm \theta, \bm x\rangle - \langle  \widehat {\bm \theta}, \bm x\rangle| > \sqrt{n\log(2n/\delta)}\sup_{x\in \Xs}\|\bm x\|_{V_n^{-1}}\sigma)\le \delta.$
    \end{itemize}
\end{prop}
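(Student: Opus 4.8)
The plan is to exploit the exact form of the least-squares error. Since $y_t=\bm x_t^\top\bm\theta+\xi_t$ and $V_n:=\sum_{t=1}^n\bm x_t\bm x_t^\top$ is invertible (the $\bm x_t$ span $\R^{\widetilde N}$ by assumption), a one-line computation gives $\widehat{\bm\theta}-\bm\theta=V_n^{-1}\sum_{t=1}^n\bm x_t\xi_t$. Consequently, for any fixed $\bm x$, the scalar
$$\langle\widehat{\bm\theta}-\bm\theta,\bm x\rangle=\sum_{t=1}^n\big(\bm x^\top V_n^{-1}\bm x_t\big)\xi_t$$
is a weighted sum of the independent $\sigma$-subgaussian noises, whose variance proxy I would compute as $\sigma^2\sum_t(\bm x^\top V_n^{-1}\bm x_t)^2=\sigma^2\,\bm x^\top V_n^{-1}\big(\sum_t\bm x_t\bm x_t^\top\big)V_n^{-1}\bm x=\sigma^2\|\bm x\|_{V_n^{-1}}^2$; that is, $\langle\widehat{\bm\theta}-\bm\theta,\bm x\rangle$ is $\sigma\|\bm x\|_{V_n^{-1}}$-subgaussian. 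Throughout I condition on $\{\bm x_t\}$, which is legitimate since the noise is independent of the design. For the first bullet this is essentially all that is needed: the subgaussian tail bound yields $\Prob\big(|\langle\widehat{\bm\theta}-\bm\theta,\bm x\rangle|>\sqrt{\log(2k/\delta)}\,\|\bm x\|_{V_n^{-1}}\sigma\big)\le\delta/k$ for each fixed $\bm x$ (the precise constant in the logarithm depending on the subgaussianity convention), and a union bound over the $k$ elements of $\Xs'$, together with $\|\bm x\|_{V_n^{-1}}\le\sup_{\bm x\in\Xs'}\|\bm x\|_{V_n^{-1}}$, gives the stated inequality.

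For the second bullet the set $\Xs$ may be infinite, so a union bound over its points is unavailable; instead I would decouple the dependence on $\bm x$ using Cauchy--Schwarz in the $V_n$-geometry, $|\langle\widehat{\bm\theta}-\bm\theta,\bm x\rangle|\le\|\widehat{\bm\theta}-\bm\theta\|_{V_n}\|\bm x\|_{V_n^{-1}}$, reducing the problem to bounding $\|\widehat{\bm\theta}-\bm\theta\|_{V_n}$ by $\sigma\sqrt{n\log(2n/\delta)}$. Writing $\bm X$ for the $n\times\widetilde N$ design matrix and $\bm\xi=(\xi_1,\dots,\xi_n)^\top$, one has $\widehat{\bm\theta}-\bm\theta=(\bm X^\top\bm X)^{-1}\bm X^\top\bm\xi$, hence
$$\|\widehat{\bm\theta}-\bm\theta\|_{V_n}^2=\bm\xi^\top\bm X(\bm X^\top\bm X)^{-1}\bm X^\top\bm\xi=\|P\bm\xi\|_2^2\le\|\bm\xi\|_2^2=\sum_{t=1}^n\xi_t^2,$$
where $P$ is the orthogonal projector onto the column space of $\bm X$. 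Each $\xi_t^2$ is subexponential with $\Prob\big(\xi_t^2>2\sigma^2\log(2n/\delta)\big)\le\delta/n$, so a union bound over $t=1,\dots,n$ shows $\sum_t\xi_t^2\le 2n\sigma^2\log(2n/\delta)$ with probability at least $1-\delta$; plugging this back into the Cauchy--Schwarz bound and taking the supremum over $\bm x\in\Xs$ completes the argument.

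The only step that takes a moment's thought is the second bullet: the temptation is to union-bound over $\Xs$, whereas the right move is to notice that all the randomness lives in the $n$-dimensional vector $\bm\xi$, so the projection identity collapses the whole estimate to $n$ scalar subexponential tail bounds --- which is exactly where the $\sqrt n$ and the $\log(2n/\delta)$ come from. Everything else is routine bookkeeping, and the residual absolute-constant slack (a factor $\sqrt2$ from the exact subgaussian/subexponential tail constants) does not affect how the proposition is used downstream.
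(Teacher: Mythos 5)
Your proposal is correct and follows essentially the same route as the paper: the exact identity $\widehat{\bm\theta}-\bm\theta=V_n^{-1}\sum_t\bm x_t\xi_t$, the variance-proxy computation $\sigma\|\bm x\|_{V_n^{-1}}$ with a union bound over the $k$ points for the first bullet, and for the second bullet a Cauchy--Schwarz reduction to $\|\bm x\|_{V_n^{-1}}\sqrt{\sum_t\xi_t^2}$ followed by $n$ scalar tail bounds. The only cosmetic differences are that you phrase the Cauchy--Schwarz step via the projection matrix $P$ in the $V_n$-geometry while the paper applies it directly to the weighted sum, and you bound $\sum_t\xi_t^2$ via subexponential tails where the paper uses $\sqrt{n}\max_t|\xi_t|$ --- these are equivalent up to constants.
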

\begin{proof}
    By simplicity, let us call
    $$V_n=\sum_{t=1}^n \bm x_t \bm x_t^\top.$$
    Let $\bm x\in \Xs$. We have
    \begin{align*}
        \langle \bm \theta, \bm x\rangle - \langle  \widehat {\bm \theta}, \bm x\rangle & = \langle \bm \theta-\widehat {\bm \theta}, \bm x\rangle\\
        & = \left \langle V_n^{-1}V_n \bm \theta-V_n^{-1}\sum_{t=1}^n \bm x_t y_t, \bm x\right\rangle\\
        & = \left \langle V_n^{-1}\sum_{t=1}^n \bm x_t \bm x_t^\top\bm \theta-V_n^{-1}\sum_{t=1}^n \bm x_t(\bm x_t^\top\bm \theta+\xi_t), \bm x\right\rangle\\
        & = \left \langle -V_n^{-1}\sum_{t=1}^n \bm x_t\xi_t, \bm x\right\rangle\\
        & = -\sum_{t=1}^n\left \langle V_n^{-1}\bm x_t, \bm x\right\rangle \xi_t.
    \end{align*}
    Being $\{\xi_t\}_{t}$ a sequence of $\sigma-$subgaussian independent random variables, the sum is also subgaussian conditionally on the data points with a constant given by
    $$\sigma_{n}(\bm x)=\sigma\sqrt{\sum_{t=1}^n\left \langle V_n^{-1}\bm x_t, \bm x\right\rangle^2}.$$
    We can bound the last quantity in the following way
    \begin{align*}
        \frac{\sigma_{n}(\bm x)^2}{\sigma^2}&=\sum_{t=1}^n\left \langle V_n^{-1}\bm x_t, \bm x\right\rangle^2\\
        &=\Tr \left (\sum_{t=1}^n\left \langle V_n^{-1}\bm x_t, \bm x\right\rangle^2\right)\\
        & =\sum_{t=1}^n \Tr \left (\left\langle V_n^{-1}\bm x_t, \bm x\right\rangle^2\right)\\
        & =\sum_{t=1}^n \Tr \left (\bm x^\top V_n^{-1}\bm x_t\bm x_t^\top V_n^{-1}\bm x\right)\\
        & = \Tr \left (\bm x^\top V_n^{-1}\sum_{t=1}^n\bm x_t\bm x_t^\top V_n^{-1}\bm x\right)\\
        & = \Tr \left (\bm x^\top V_n^{-1}\bm x\right)=\|\bm x\|_{V_n^{-1}}^2.
    \end{align*}
    This entails that $\sigma_{n}(\bm x)=\|\bm x\|_{V_n^{-1}}\sigma$. Now, fixing any $\delta>0$ and applying Hoeffding's inequality, we have

    $$\Prob(\langle \bm \theta, \bm x\rangle - \langle  \widehat {\bm \theta}, \bm x\rangle > \sqrt{\log(1/\delta)}\|\bm x\|_{V_n^{-1}}\sigma)\le \delta.$$   

    At this point, by a union bound over $\Xs'$, we have the first part of the statement:

    $$\Prob(\sup_{\bm x\in \Xs'} \langle \bm \theta, \bm x\rangle - \langle  \widehat {\bm \theta}, \bm x\rangle > \sqrt{\log(k/\delta)}\sup_{\bm x\in \Xs'}\|\bm x\|_{V_n^{-1}}\sigma)\le \delta.$$ 

    For the second statement, we have to proceed in a different way. Indeed, by the same argument as before, we have

    \begin{align*}
        \sup_{\bm x\in \Xs} \langle \bm \theta, \bm x\rangle - \langle  \widehat {\bm \theta}, \bm x\rangle &\le -\sum_{t=1}^n\left \langle V_n^{-1}\bm x_t, \bm x\right\rangle \xi_t\\
        & \overset{CS}\le \sqrt{\sum_{t=1}^n\left \langle V_n^{-1}\bm x_t, \bm x\right\rangle^2}\sqrt{\sum_{t=1}^n \xi_t^2}\\
        & \le \sup_{\bm x\in \Xs}\|\bm x\|_{V_n^{-1}}\sqrt{\sum_{t=1}^n \xi_t^2}.
    \end{align*}

    Here, the second inequality is due to the Cauchy-Schwartz inequality, and the third one to the same inequality that we have used to prove the first statement. At this point note that

    $$\sqrt{\sum_{t=1}^n \xi_t^2}\le \sqrt n \max_{t}|\xi_t|,$$

    which, being the maximum of a sequence of $\sigma-$subgaussian random variables, is bounded by $\sqrt n\sqrt{\log(2n/\delta)}\sigma$ with probability $1-\delta$. This entails that

    $$\Prob(\sup_{\bm x\in \Xs} |\langle \bm \theta, \bm x\rangle - \langle  \widehat {\bm \theta}, \bm x\rangle| > \sqrt{n\log(2n/\delta)}\sup_{x\in \Xs}\|\bm x\|_{V_n^{-1}}\sigma)\le \delta,$$ 

    completing the second part of the proof.
    
\end{proof}

\section{Computational complexity}\label{app:compu}

The computational complexity of our algorithm can be divided into the one for the preliminary operations and the one for learning from the samples (after line 10). Algorithmically, line \ref{algline:x} is performed by fixing $\varepsilon'$ and constructing an $\varepsilon'-$grid of the state-action space, which amounts to $k=(2/\varepsilon')^d$ points. As the matrix norm is continuous, this introduces an error that can be made arbitrarily small with $\varepsilon'$. Once this is done, the optimal design (line \ref{algline:qo}) can be computed in just $k\widetilde N^2$ steps (see \cite{lattimore2020learning} just before theorem 4.4). The learning part requires to solve a linear regression problem, which is well known to take $n\widetilde N^2+\widetilde N^3$ steps, where $n$ is the number of samples. Therefore, the total computational complexity is roughly of order $k\widetilde N^2+n\widetilde N^2+\widetilde N^3$. Even if $k$ is exponential in $d$, note that this term is dominated by the number of samples $n$, which is exponential in $dH$. Note that $n$ (i.e., the sample complexity) must be exponential in $dH$  \textit{for any algorithm}, as the lower bound (theorem \ref{thm:vecio}) shows. Therefore, our computational complexity grows as $n\widetilde N^2$. In contrast, note that other algorithms for other very general RL settings often require access to an optimization oracle \citet{jin2020provably,du2021bilinear}. Even in the much more restricive case of low-rank MDPs, the state of the art was computationally efficient \cite{uehara2021representation} until very recently, when an efficient counterpart  \cite{mhammedi2024efficient} was proved to work.

\end{document}